\theoremstyle{plain}
\newtheorem{theorem}{Theorem}[section]
\newtheorem{lemma}[theorem]{Lemma}
\theoremstyle{definition}
\theoremstyle{remark}
\newcommand{\Framework}{DEER}
\def\aka{\emph{a.k.a.,~}}
\definecolor{prompt}{RGB}{59, 130, 246}      
\definecolor{iter0}{RGB}{52, 211, 153}       
\definecolor{iter1}{RGB}{251, 191, 36}       
\definecolor{iter2}{RGB}{168, 85, 247}       
\definecolor{iter3}{RGB}{239, 68, 68}        
\icmltitlerunning{Submission and Formatting Instructions for ICML 2026}
\begin{document}

\twocolumn[
  \icmltitle{
\texorpdfstring{
  \raisebox{-0.25\height}{\includegraphics[width=0.04\textwidth]{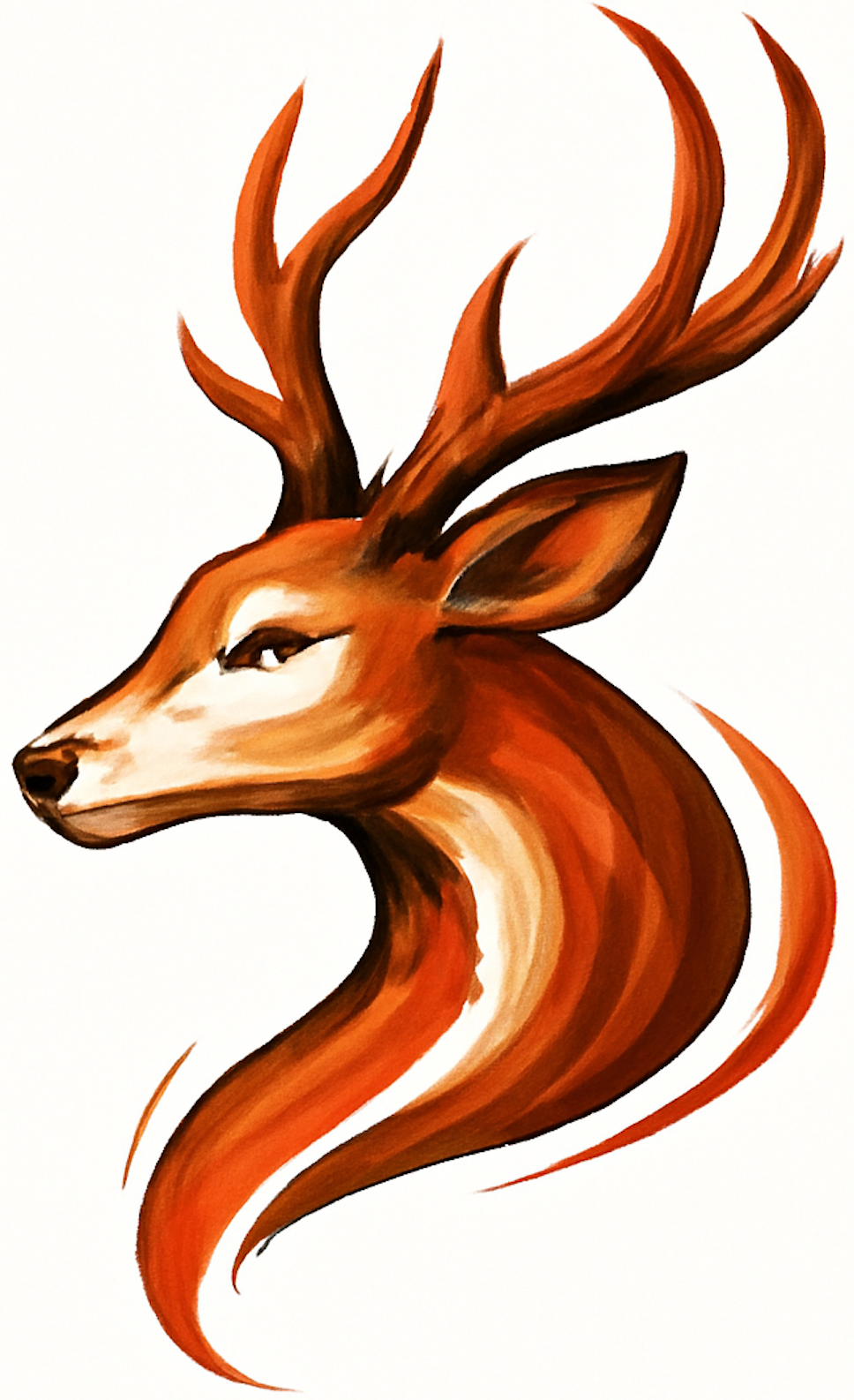}}
  Draft with Diffusion, Verify with Autoregressive Models
}{
  Draft with Diffusion, Verify with Autoregressive Models
}
}

  \icmltitlerunning{Draft with Diffusions, Verify with Autoregressive Models} 

  \icmlsetsymbol{equal}{*}

  \begin{icmlauthorlist}
    \icmlauthor{Zicong Cheng}{thu,seer,sjtu}
    \icmlauthor{Guo-Wei Yang}{seer}
    \icmlauthor{Jia Li}{thu}
    \icmlauthor{Zhijie Deng}{sjtu}
    \icmlauthor{Meng-Hao Guo\textsuperscript{\Letter}}{thu}
    \icmlauthor{Shi-Min Hu}{thu}

  \end{icmlauthorlist}

  \icmlaffiliation{thu}{Tsinghua University}
  \icmlaffiliation{seer}{Proxseer Inc}
  \icmlaffiliation{sjtu}{Shanghai Jiao Tong University}

  \icmlcorrespondingauthor{Meng-Hao Guo}{gmh@tsinghua.edu.cn}

  \icmlkeywords{Machine Learning, ICML}

  \vskip 0.3in
]

\printAffiliationsAndNotice{\ }

\begin{abstract}

Efficiency, as a critical practical challenge for LLM-driven agentic and reasoning systems, is increasingly constrained by the inherent latency of autoregressive (AR) decoding.
Speculative decoding mitigates this cost through a draft–verify scheme, yet existing approaches rely on AR draft models (\aka drafters), which introduce two fundamental issues:  (1) step-wise uncertainty accumulation leads to a progressive collapse of trust between the target model and the drafter,
and 
(2) inherently sequential decoding of AR drafters. 
Together, these factors cause limited speedups.
In this paper, we show that a diffusion large language model (dLLM) drafters can naturally overcome these issues through its fundamentally different probabilistic modeling and efficient parallel decoding strategy.
Building on this insight, we introduce \textbf{\Framework{}}, an efficient speculative decoding framework that drafts with diffusion and verifies with AR models. 
To enable high-quality drafting, \Framework{} employs a two-stage training pipeline to align the dLLM-based drafters with the target AR model, and further adopts single-step decoding to generate long draft segments.
Experiments show \Framework{} reaches draft acceptance lengths of up to 32 tokens, far surpassing the 10 tokens achieved by EAGLE-3. 
Moreover, on HumanEval with Qwen3-30B-A3B, \Framework{} attains a 5.54× speedup, while EAGLE-3 achieves only 2.41×. 
Code, model, demo, etc, will be available at 
\url{https://czc726.github.io/DEER/}

\end{abstract}

\section{Introduction}

\begin{figure}[htbp]
    \centering
    \includegraphics[width=\linewidth]{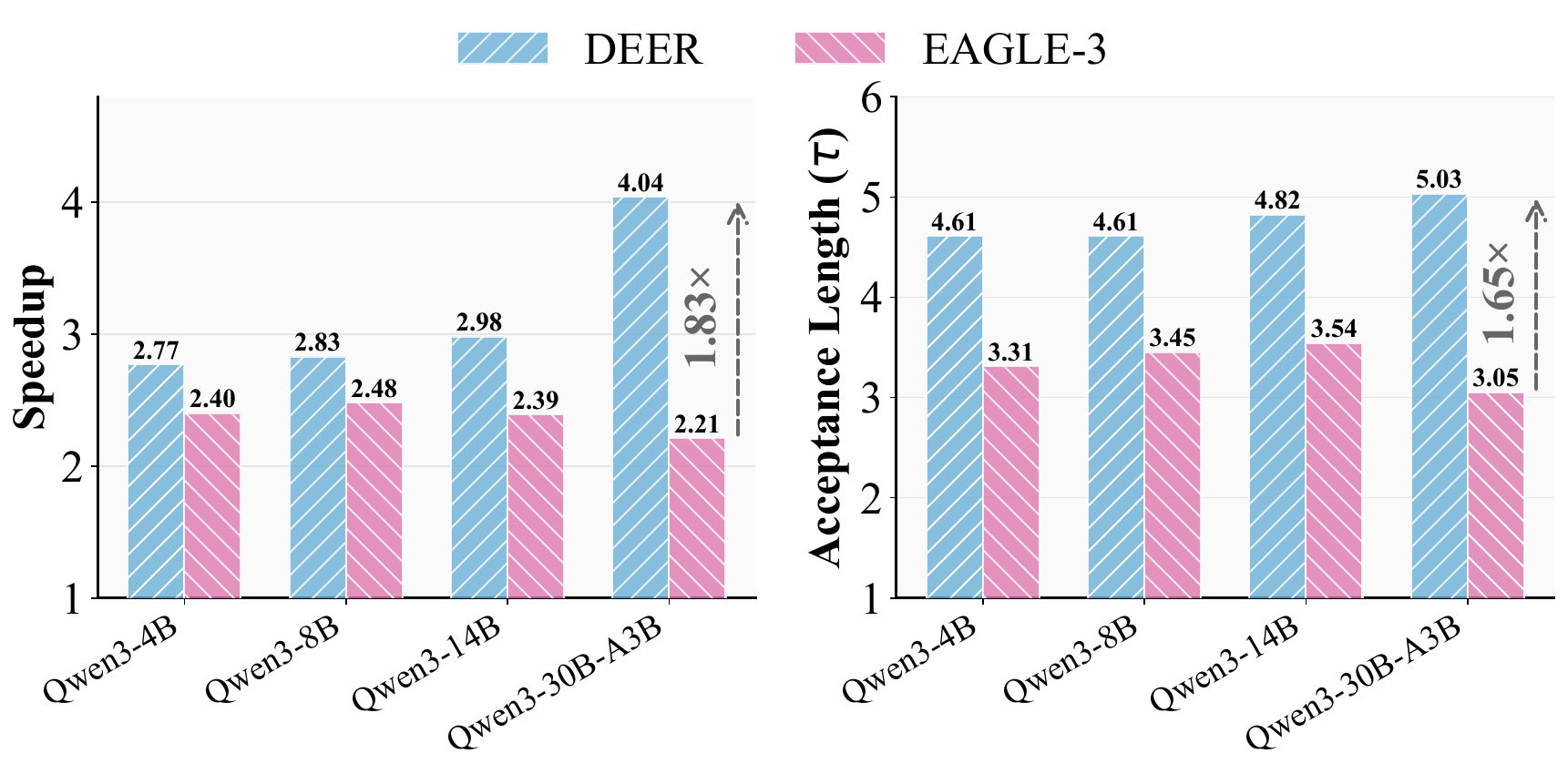}
    \caption{Performance Comparison of \Framework{} and EAGLE-3: Speedup and $\tau$ Across Models (tokens/s) at Temperature=0}
    \label{fig:first_result}
\end{figure}
 
Large language models (LLMs) have fundamentally reshaped the modern AI ecosystem, owing to their remarkable generalization~\cite{MATH25,INDICT,Bee}.
Meanwhile, as the demand for extended context continues to rise, particularly in complex reasoning and agentic tasks, 
efficiency becomes increasingly critical. 
To mitigate this growing bottleneck while preserving model fidelity, 
speculative decoding~\cite{SPS,accs} has  emerged as an effective approach for efficient decoding, providing lossless acceleration by enabling lightweight drafters to propose candidate continuations that are verified by the target model.

However, existing speculative decoding methods overwhelmingly rely on AR drafters, which impose two structural limitations.
On the one hand, 
left-to-right decoding induces step-wise uncertainty accumulation, where uncertainties in early draft tokens propagate through the sequence, progressively degrading alignment with the target model and sharply limiting acceptance length. 
We term this phenomenon as gradual
\textbf{collapse of trust} between the drafter and the target model.
As illustrated in Figure~\ref{fig:motivation}, when the drafter conditions on its own unverified outputs, even a small discrepancy from the target model at early positions is recursively amplified through left-to-right decoding. The draft trajectory gradually drifts outside the acceptance region, causing the verifier to reject increasing portions of the draft. On the other hand, 
AR drafters themselves must decode sequentially, preventing them from exploiting parallel generation.
Together, above structural limitations become the bottleneck that restricts attainable speedups.

\begin{figure}[t]
    \centering
    \includegraphics[width=\linewidth]{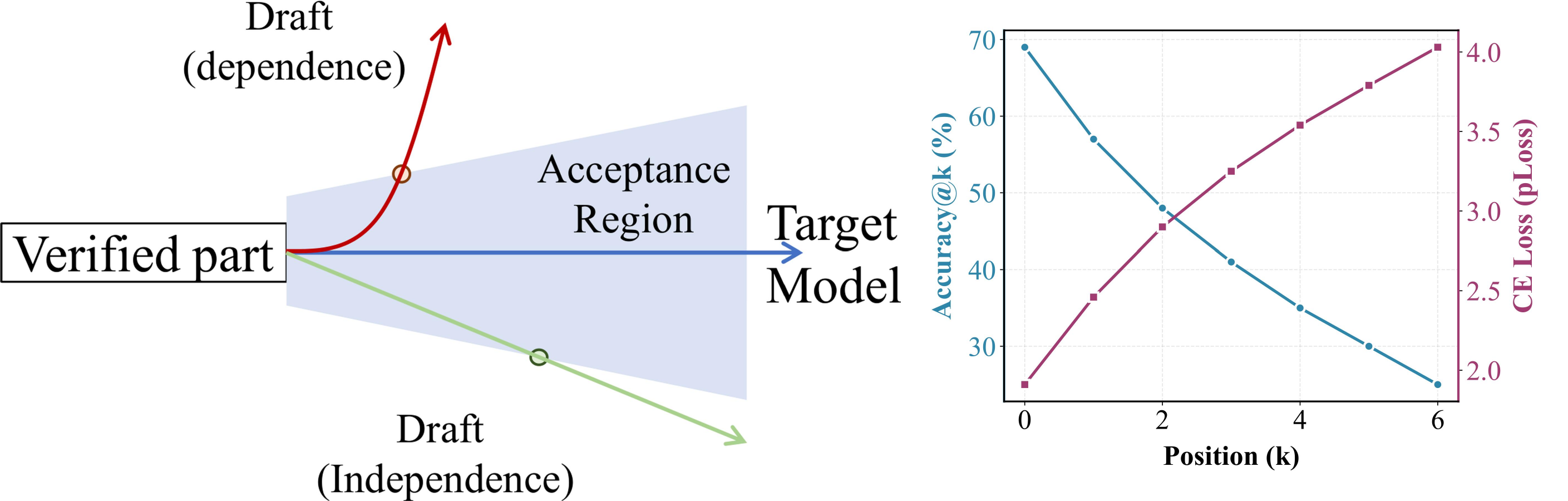}
    \caption{\textbf{Left}: Comparison between the dependence-based and independence-based drafting strategies. T
\textbf{Right}: Accuracy@k and cross-entropy loss across intermediate checkpoints of the EAGLE-3 training pipeline. Together, the plots illustrate draft–backbone alignment and the progression of training performance.}
    \label{fig:motivation}
\end{figure}

We address above two challenges
by introducing \Framework{}, a novel framework that utilizes discrete-space dLLMs as efficient draft generators.
Ideally, unlike AR drafters, a dLLM-based drafter can naturally
generate an entire sequence of tokens in a single denoising process, thereby not only overcoming the efficiency limitations of serial AR decoding, but also theoretically eliminating the multi-step uncertainty accumulation inherent in AR sequential generation.
However, naïvely applying dLLMs as drafters for speculative decoding bring serious distribution mismatch, as standard dLLMs are trained for global sequence generation 
rather than prefix-conditioned local continuation, leading draft proposals are misaligned with the target AR distribution.

To address this challenge, we develop a two-stage training strategy that
adapts pretrained dLLMs into efficient and high-fidelity blockwise drafters.
Stage-1, \textit{AR-Style Continuation Distillation}, aligns the diffusion model with the target AR distribution by training on truncated teacher answers appended with a special SEP marker, enabling stable prefix-conditioned continuation.
Stage-2, \textit{Prefix-Conditioned Accuracy Refinement}, enhances
continuation fidelity through weighted suffix masking with an exponentially decaying loss, improving token-level stability near the AR verification boundary. Combined, these stages produce a dLLM-based drafter that supports reliable blockwise generation.
Furthermore, we observe above training pipeline unlocks an emergent capability we term reliable block regeneration: the ability of the dLLM to repeatedly accept partially masked suffixes and regenerate them in a coherent manner.

During inference, \Framework{} supports one-step block drafting, eliminating left-to-right dependency and enabling significantly longer accepted drafts. 
Across various code-generation benchmarks, \Framework{} achieves acceptance lengths of up to 32 tokens, far exceeding the 10 tokens typical of advanced methods such as EAGLE-3. On HumanEval with Qwen3-30B-A3B, \Framework{} delivers a 5.54$\times$ speedup, surpassing the 2.41$\times$ speedup of EAGLE-3 and establishing dLLM-based drafting as a promising path for acceleration.

In summary, our key contributions are as follows:

\begin{itemize} 
\setlength{\itemsep}{0pt} 

\item 
 We introduce \Framework{}, the first speculative decoding framework that relies \emph{exclusively} on a discrete-space dLLM as the drafter, removing the need for auxiliary AR models or hybrid drafting architectures. 
Further, we reveal a novel generative capability in \Framework{}, termed reliable block regeneration, wherein the dLLM can perform genuinely blockwise generation from incrementally masked suffixes.

\item 
 We propose a two-stage alignment method that adapts dLLMs to the structural requirements of speculative decoding: Stage-1 resolves the distribution mismatch for prefix-conditioned continuation, while Stage-2 provides fine-grained local accuracy through exponentially weighted suffix masking.

\item  
 Experiments on diverse benchmarks and model scales (Qwen3--4B to 30B), \Framework{} consistently outperforms existing approaches. For instance, on HumanEval with Qwen3-30B-A3B, \Framework{} attains a 5.54× speedup, while EAGLE-3 achieves only 2.41×.

\end{itemize}

\section{Related Work}

\subsection{Speculative Decoding}

\textbf{Autoregressive and Tree-based Drafting.} Standard methods employ small auxiliary AR models~\cite{SPS} or manipulate hidden states to predict tree-structured continuations (e.g., Medusa~\cite{Medusa}, Hydra~\cite{Hydra}, EAGLE series~\cite{Eagle,Eagle-2,Eagle-3}). While effective, these methods remain inherently sequential: draft tokens are generated left-to-right, meaning early uncertainties propagate and corrupt the draft chain. \textbf{In contrast}, \Framework{} eliminates this serial dependency by generating a full block of tokens in a single diffusion step, preventing uncertainty accumulation even at long draft lengths.

\textbf{$N$-gram and Heuristic Drafting.}
Lightweight approaches like Lookahead~\cite{Lookahead} and DiffuSpec~\cite{DiffuSpec} construct drafts via $n$-gram matching or retrieval.
While efficient, they lack global context, causing acceptance rates to drop on complex sequences. \Framework{} leverages dLLMs’ global modeling to keep drafts coherent and contextually consistent over long spans.

\textbf{Diffusion-based Drafting.} Speculative Diffusion Decoding (SDD)~\cite{SDD} pioneered using dLLMs for drafting but relies on continuous-space, multi-step denoising. This prohibits precise temperature control and introduces step-wise drift from the AR verifier. \Framework{} diverges by operating in discrete space with a strictly aligned one-step generation process, ensuring high compatibility with the target AR distribution.

\subsection{Self-Drafting and Hybrid Architectures}

\textbf{Latent and Intermediate Self-Drafting.} Methods such as SSDD~\cite{SSDD} and SSMD~\cite{SSMD} exploit intermediate noisy states or self-generated diffusion logits to form drafts. However, these intermediate representations are often noisy and unaligned with the final autoregressive objective. \Framework{} avoids this by utilizing a fully denoised, alignment-tuned output distribution, yielding significantly cleaner and more stable proposals.

\textbf{Hybrid AR-Diffusion Training.} Approaches like TiDAR~\cite{TiDAR} retrain AR models to jointly perform diffusion-style generation. While this unifies drafting and verification, the dual-objective training is computationally expensive and often induces conflict that degrades base model performance. \Framework{} employs a modular design with a dedicated, lightweight dLLM. This avoids expensive retraining of the target LLM and preserves its original capabilities without objective conflicts.

\subsection{Diffusion language models}

Early work such as D3PM~\cite{D3PM} and Diffusion-LM~\cite{Diffusion-lm} introduced diffusion language models in continuous spaces, while later approaches like LLADA~\cite{LLADA} and Dream~\cite{Dream} scaled them to discrete tokens and larger model sizes. A key benefit of diffusion models is their ability to generate multiple tokens in parallel, reducing autoregressive dependency. Leveraging this, dLLMs support coherent block-wise token generation for efficient speculative decoding.

\begin{figure*}[t]
    \centering
    \includegraphics[width=0.9\linewidth]{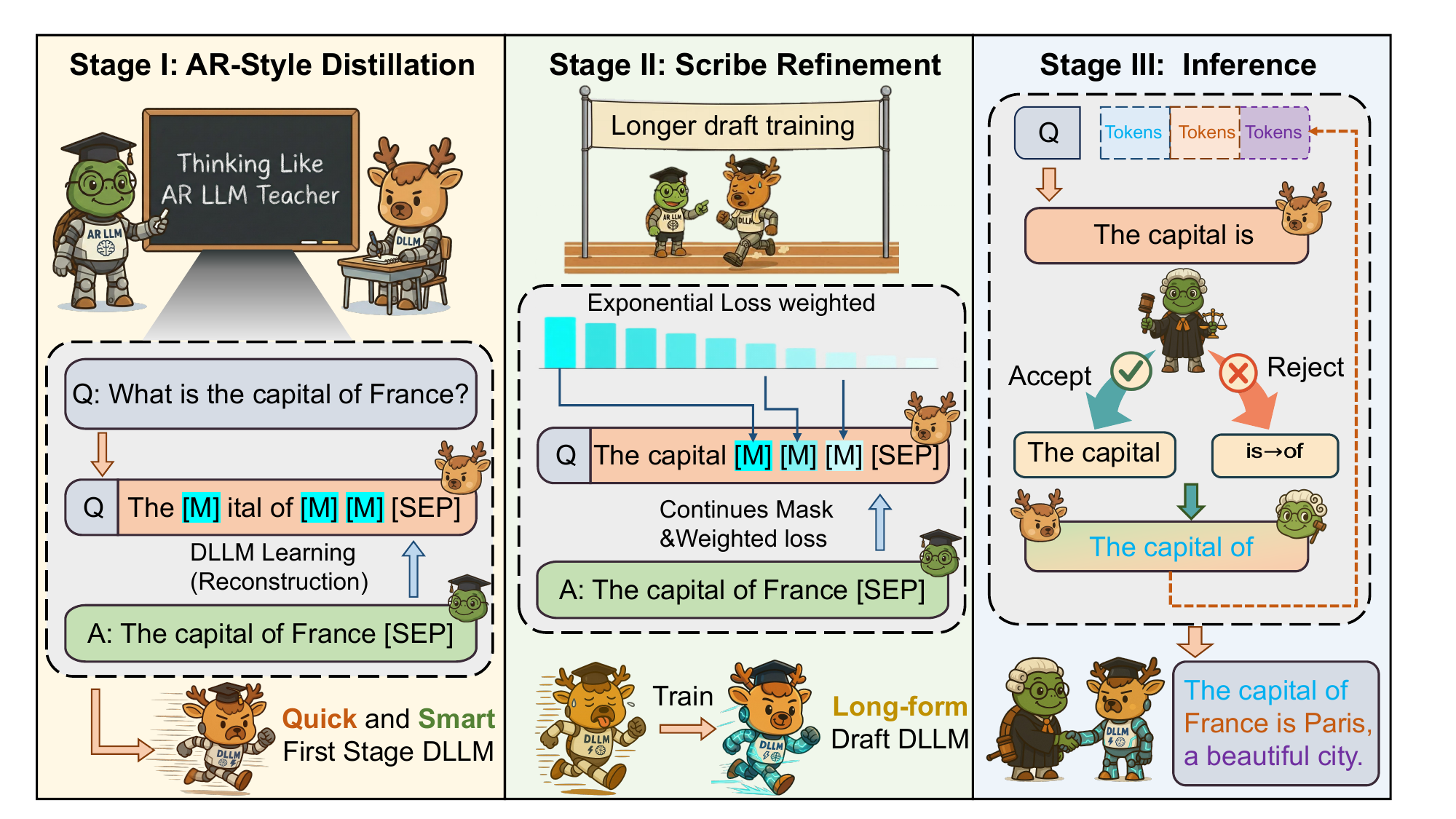}
    \caption{Overview of the \Framework{} pipeline.
\textbf{Stage~\textit{I} (AR-Style Continuation Distrilling)} structurally adapts the dLLM to generate full suffix blocks from prefix + [SEP] using truncated teacher answers.
\textbf{Stage~\textit{II} (Scribe Refinement)} stabilizes local coherence through weighted suffix masking with an exponential decaying loss.
\textbf{Stage~\textit{III} Inference} performs draft-then-verify decoding, where large draft blocks proposed by the dLLM are accepted or corrected by the target AR model, accelerating inference while preserving quality.}
    \label{fig:pipeline}
\end{figure*}
\section{\Framework{}}
Speculative decoding accelerates autoregressive (AR) inference by enabling a lightweight model to propose multiple tokens in parallel, which are then verified by the target AR model. However, existing AR-based drafters inevitably suffer from \textbf{left-to-right uncertainty accumulation} (\aka{gradual collapse of trust between drafters and target model}): each drafted token conditions on previously unverified ones, amplifying early deviations. As draft depth increases, acceptance sharply declines, limiting speedup.

DEER resolves this bottleneck by leveraging a dLLM as the drafter. Unlike AR generation, diffusion models jointly reconstruct the entire suffix, making proposal quality largely invariant to token depth. To adapt pretrained dLLMs to prefix-conditioned continuation, we propose a two-stage \textbf{Diffusion-to-Autoregressive (D2A) Alignment} pipeline, followed by a lightweight block-wise verification procedure.

\subsection{Diffusion-to-AR Alignment}

Diffusion models operate through global denoising and are not inherently 
consistent with AR-style prefix continuation. Directly employing a pretrained 
dLLM as a drafter causes severe distribution mismatch, producing 
unstable suffix predictions.
To make dLLMs suitable for as drafters,
D2A adapts them to 
prefix-conditioned continuation behavior.

\paragraph{Notation}
We denote: $p_{\mathrm{AR}}$: AR teacher model;
$x_0$: original tokens with masked future spans;
$x_t$: noised tokens at timestep $t$;
$L$: full length of a teacher-generated answer;
$l_q$: prefix/question length;
$\mathbf{M}$: mask token;
\texttt{SEP}: separator token indicating truncation;
$p_\theta$: aligned dLLM.

\begin{table*}[t]
\centering
\renewcommand{\tabcolsep}{3.0mm}

\begin{threeparttable}
\caption{Performance Comparison of Acceleration Methods Across Models  (temperature=0.6), with KV cache}
\label{tab:model_performance_comparison_tem06}
\small

\begin{tabular}{l *{5}{c c}}

\toprule
\multicolumn{1}{c}{\textbf{Method}} &
\multicolumn{2}{c}{\textbf{MBPP}} &
\multicolumn{2}{c}{\textbf{CodeAlpacaPy}} &
\multicolumn{2}{c}{\textbf{HumanEval}} &
\multicolumn{2}{c}{\textbf{LiveCodeBench}} &
\multicolumn{2}{c}{\textbf{Mean}}\\
\cmidrule(lr){2-3} \cmidrule(lr){4-5} \cmidrule(lr){6-7} \cmidrule(l){8-9} \cmidrule(l){10-11}

 &
Speedup & $\tau$ &
Speedup & $\tau$ &
Speedup & $\tau$ &
Speedup & $\tau$ &
Speedup & $\tau$ \\
\midrule

\rowcolor{gray!15}
\multicolumn{11}{c}{\textbf{Qwen3-4B}} \\
EAGLE3 & ×1.98 & 2.60 & ×1.86 & 2.43 & ×2.01 & 2.63 & ×1.79 & 2.51 & ×1.91
& 2.54 \\
\Framework{} & ×2.82 & 4.79 & ×2.51 & 4.10 & ×2.59 & 4.51 & ×2.14 & 4.25 & ×2.52
& 4.41 \\

\rowcolor{gray!15}
\multicolumn{11}{c}{\textbf{Qwen3-8B}} \\
EAGLE3 & ×2.12 & 3.46 & ×1.55 & 2.58 & ×2.18 & 3.23 & ×1.33 & 2.42 & ×1.80 & 2.92 \\
\Framework{} & ×3.35 & 4.84 & ×2.40 & 3.81 & ×3.18 & 4.66 & ×1.59 & 2.93 & ×2.63 & 4.06 \\

\rowcolor{gray!15}
\multicolumn{11}{c}{\textbf{Qwen2-7B}} \\
EAGLE3 & ×2.40 & 3.28 & ×2.18 & 3.23 & ×2.41 & 3.25 & ×2.39 & 2.91 & ×2.35 & 3.18 \\
\Framework{} & ×2.45 & 3.81 & ×2.27 & 3.50 & ×2.52 & 3.90 & ×2.84 & 4.39 & ×2.52 & 3.90 \\

\rowcolor{gray!15}
\multicolumn{11}{c}{\textbf{Qwen3-14B}} \\
EAGLE3 & ×1.89 & 2.22 & ×2.01 & 2.44 & ×1.91 & 2.61 & ×1.75 & 2.30 & ×1.89& 2.39 \\
\Framework{} & ×3.67 & 4.93 & ×2.80 & 3.69 & ×3.50 & 5.00 & ×2.53 & 3.93 & ×3.13
& 4.39 \\

\rowcolor{gray!15}
\multicolumn{11}{c}{\textbf{Qwen3-30B-A3B}} \\
EAGLE3 & ×2.07 & 2.35 & ×1.91 & 2.31 & ×2.14 & 2.57 & ×1.90 & 2.38 & ×2.01 & 2.40 \\
\Framework{} & ×3.69 & 4.79 & ×3.23 & 3.82 & ×4.32 & 5.48 & ×3.24 & 4.09 & ×3.62
& 4.45 \\
\bottomrule
\end{tabular}
\end{threeparttable}
\end{table*}

\begin{table*}[t]
\centering
\renewcommand{\tabcolsep}{3.0mm}
\begin{threeparttable}
\caption{Performance Comparison of Acceleration Methods Across Models (tokens/s), temperature=0, with KV cache}
\label{tab:model_performance_comparison_tem0}
\small

\renewcommand{\arraystretch}{1.15}

\begin{tabular}{l *{5}{cc}}
\toprule
\textbf{Method} &
\multicolumn{2}{c}{\textbf{MBPP}} &
\multicolumn{2}{c}{\textbf{CodeAlpacaPy}} &
\multicolumn{2}{c}{\textbf{HumanEval}} &
\multicolumn{2}{c}{\textbf{LiveCodeBench}} &
\multicolumn{2}{c}{\textbf{Mean}} \\
\cmidrule(lr){2-3}
\cmidrule(lr){4-5}
\cmidrule(lr){6-7}
\cmidrule(lr){8-9}
\cmidrule(lr){10-11}
& Speedup & $\tau$ &
  Speedup & $\tau$ &
  Speedup & $\tau$ &
  Speedup & $\tau$ & 
  Speedup & $\tau$ 
  \\
\midrule

\rowcolor{gray!20}
\multicolumn{11}{c}{\textbf{Qwen3-4B}} \\
MEDUSA & ×1.20 & 2.07 & ×1.22 & 1.85 & ×1.34 & 1.94 & ×1.17 & 1.89 & ×1.23 & 1.94\\
Hydra  & ×2.33 & 2.70 & ×2.17 & 2.50 & ×2.30 & 2.53 & ×2.08 & 2.54 & ×2.22 & 2.58\\
EAGLE3 & ×2.59 & 3.37 & ×2.20 & 3.11 & ×2.47 & 3.26 & ×2.33 & 3.50 & ×2.40 & 3.31 \\
\Framework{}  & ×2.86 & 4.91 & ×2.58 & 4.21 & ×2.97 & 4.68 & ×2.67 & 4.63 & ×2.77 & 4.61 \\

\rowcolor{gray!20}
\multicolumn{11}{c}{\textbf{Qwen3-8B}} \\
MEDUSA & ×1.29 & 1.93 & ×1.33 & 1.98 & ×1.32 & 1.97 & ×1.34 & 1.99 & ×1.32 &1.97 \\
Hydra  & ×2.37 & 2.79 & ×2.02 & 2.53 & ×2.39 & 2.58 & ×2.23 & 2.72 & ×2.25 &2.66 \\
EAGLE3 & ×2.59 & 3.31 & ×2.21 & 3.25 & ×2.65 & 3.87 & ×2.46 & 3.38 & ×2.48 &3.45 \\
\Framework{}  & ×3.00 & 5.12 & ×2.35 & 4.06 & ×3.30 & 5.00 & ×2.67 & 4.27 & ×2.83 &4.61 \\

\rowcolor{gray!20}
\multicolumn{11}{c}{\textbf{Qwen2-7B}} \\
EAGLE3 & ×2.36 & 3.32 & ×2.40 & 3.27 & ×2.65 & 3.34 & ×2.29 & 2.94 & ×2.43 &3.22\\
\Framework{}  & ×2.45 & 3.57 & ×2.50 & 3.69 & ×2.79 & 4.21 & ×3.06 & 4.96 & ×2.70 &4.11 \\

\rowcolor{gray!20}
\multicolumn{11}{c}{\textbf{Qwen3-14B}} \\
EAGLE3 & ×2.50 & 3.52 & ×2.13 & 3.45 & ×2.62 & 3.72 & ×2.30 & 3.48 & ×2.39 &3.54 \\
\Framework{}  & ×3.18 & 5.28 & ×2.43 & 3.98 & ×3.59 & 5.72 & ×2.73 & 4.31 & ×2.98 & 4.82 \\

\rowcolor{gray!20}
\multicolumn{11}{c}{\textbf{Qwen3-30B-A3B}} \\
EAGLE3 & ×2.22 & 3.07 & ×2.06 & 2.89 & ×2.41 & 3.21 & ×2.14 & 3.01 & ×2.21 &3.05 \\
\Framework{}  & ×4.00 & 4.87 & ×3.08 & 4.04 & ×5.54 & 6.58 & ×3.52 & 4.62 & ×4.04 &5.03 \\
\bottomrule
\end{tabular}
\end{threeparttable}
\end{table*}

\begin{table}[t]
\centering
\caption{Average accepted-token lengths on Qwen3-30B-A3B with and without Stage~II refinement.}
\label{tab:quench_refine}
\begin{tabular}{lcc}
\toprule
\textbf{Benchmark} & \textbf{w/o Refinement} & \textbf{w/ Refinement} \\
\midrule
MBPP            & 4.74 & 4.87 \\
CodeAlpacaPy    & 3.47 & 4.04 \\
HumanEval       & 5.38 & 6.58 \\
LiveCodeBench   & 3.87 & 5.03 \\
\bottomrule
\end{tabular}
\end{table}

\newcounter{stage}
\setcounter{stage}{1}
\subsubsection{Stage~\textit{\Roman{stage}}: AR-style Distillation}

A dLLM pre-trained with full-sentence denoising does not inherently model
causal continuation: if the prefix is truncated, its denoising process still
implicitly relies on future tokens that are no longer available. To enable
prefix-conditioned generation, we finetune the model to imitate an
 AR teacher on continuation-style data.

Given a teacher-generated answer
$
\mathcal{A} = \{ a_n^{1:l_n} \},
$
we randomly truncate each answer, mask the suffix, and append a
\texttt{SEP} token to mark the continuation boundary. The dLLM observes a
noisy version $x_t$ and is trained to denoise only the masked continuation:

\begin{equation}
\label{eq:d2a-stage1}
\mathcal{L}_{\mathrm{Distill}}
=
-\mathbb{E}_{t, x_0, x_t}
\left[
\frac{1}{t}
\sum_{i=l_q}^{L-1}
\mathbf{1}[x_t^i=\mathbf{M}]\,
r_i
\right].
\end{equation}

\begin{equation}
\label{eq:d2a-ri}
r_i = \log p_\theta(x_0^i \mid x_t)
\end{equation}

This first stage adapts the dLLM to a setting where the past is observed but
the future must be predicted, aligning its continuation behavior with the AR
teacher and making it compatible with speculative decoding.

\setcounter{stage}{2}
\subsubsection{Stage~\textit{\Roman{stage}}: Scribe Refinement}

While the above training enables causal continuation, speculative acceptance is
particularly sensitive to the tokens that appear immediately after the prefix.
To refine accuracy precisely in this region, we mask only the last
$
R \sim \mathrm{Uniform}(1,96)
$
tokens of the answer, instead of the entire suffix. Tokens closer to the
prefix are emphasized with exponentially increasing weights:

\begin{equation}
\label{eq:d2a-weights}
w_i = \alpha^{R - i}, \qquad i = 1, \dots, R,
\end{equation}

leading to the refinement objective:
\begin{equation}
\label{eq:d2a-stage2}
\mathcal{L}_{\mathrm{Refine}}
= -\mathbb{E}_{t, x_0, x_t} \left[
\frac{1}{t}
\sum_{i=l_q}^{L-1}
w_i\,
\mathbf{1}[x_t^{i}=\mathbf{M}]\,
r_i
\right].
\end{equation}

Through this masked-span curriculum, the dLLM increasingly concentrates
capacity on the region where speculative verification first interacts with the
draft, leading to more reliable block acceptance.

\subsection{Inference}
\label{sec:inference}

At inference time, we use the aligned dLLM as a parallel drafter and an
 AR model as an exact verifier within a block-wise speculative
decoding scheme. Given a current prefix $\mathbf{x}_{1:j}$, the dLLM proposes a
block of $k$ tokens in parallel:
$
\hat{\mathbf{y}}_{j+1:j+k}
\sim
q_\theta(\cdot \mid \mathbf{x}_{1:j}),
$
and the AR model then decides, token by token, whether to accept each proposal.

For the $i$-th token in the block, we compute an acceptance probability
\begin{equation}
\label{eq:accept}
\alpha_i
=
\min\!\left(
1,\;
\frac{
p_{\mathrm{AR}}(\hat{y}_{j+i} \mid \mathbf{x}_{1:j+i-1})
}{
q_\theta(\hat{y}_{j+i} \mid \mathbf{x}_{1:j})
}
\right),
\quad i = 1,\dots,k.
\end{equation}
With probability $\alpha_i$ the draft token is accepted; otherwise it is
replaced by an AR sample:
\begin{equation}
\label{eq:resample}
\hat{y}_{j+i}
\;\propto\;
\max\Bigl(0,\, p_{\mathrm{AR}}(\cdot \mid \mathbf{x}_{1:j+i-1}) - q_\theta(\cdot \mid \mathbf{x}_{1:j})\Bigr),
\end{equation}
In both cases, the chosen token (either the accepted draft or the AR resample)
is appended to the prefix and becomes part of the context for subsequent
positions.

\paragraph{Uncertainty accumulation vs.\ stable block proposals.}
In classical speculative decoding with an autoregressive drafter, the draft
distribution at position $i$ depends on previously sampled draft tokens:
$
q^{\mathrm{AR}}(\hat{y}_i \mid \mathbf{x}_{1:j}, \hat{\mathbf{y}}_{1:i-1})
\neq
q^{\mathrm{AR}}(\hat{y}_i \mid \mathbf{x}_{1:j}),
$
so any divergence between the drafter and $p_{\mathrm{AR}}$ at early positions
propagates to later ones. As a result, the distribution mismatch
$
\mathrm{KL}\!\left(
p_{\mathrm{AR}}(\hat{y}_i \mid \mathbf{x}_{1:j+i-1})
\;\big\|\;
q^{\mathrm{AR}}(\hat{y}_i \mid \mathbf{x}_{1:j}, \hat{\mathbf{y}}_{1:i-1})
\right)
$
tends to grow with $i$, leading to left-to-right uncertainty accumulation and
rapidly decreasing acceptance rates.

In contrast, the dLLM uses block-wise masked conditioning, which yields
$
q_\theta(\hat{y}_i \mid \mathbf{x}_{1:j}, \hat{\mathbf{y}}_{1:i-1})
=
q_\theta(\hat{y}_i \mid \mathbf{x}_{1:j}),
$
so the proposal at position $i$ is independent of previously drafted tokens.
The mismatch between $p_{\mathrm{AR}}$ and $q_\theta$ at depth $i$ is therefore
determined solely by how well $q_\theta(\cdot \mid \mathbf{x}_{1:j})$ matches
$p_{\mathrm{AR}}(\cdot \mid \mathbf{x}_{1:j+i-1})$, rather than by accumulated
errors in earlier drafts. In other words, the drafter does not \emph{amplify}
its own past mistakes, which is the primary source of degradation in
AR-based speculative decoding.

\paragraph{Overall decoding procedure.}
Putting these pieces together, decoding proceeds by alternating between (i)
parallel block proposals from the dLLM and (ii) token-wise validation and
correction by the AR model. Most computation is shifted to the parallelizable
drafting step, while the AR model is used only for lightweight verification to
guarantee exact marginal correctness. The full inference routine is summarized
in Algorithm~\ref{alg:framework}.

\begin{algorithm}[t]
\caption{\Framework{} Inference}
\label{alg:framework}
\begin{algorithmic}[1]
\STATE \textbf{Input:} prefix $\mathbf{x}_{1:j}$, block size $k$
\WHILE{not EOS and length limit not reached}
    \STATE Sample draft block 
    $
    \hat{\mathbf{y}}_{j+1:j+k}
    \sim q_\theta(\cdot \mid \mathbf{x}_{1:j})
    $
    \FOR{$i = 1 \dots k$}
        \STATE Compute $\alpha_i$ via Eq.~\ref{eq:accept}
        \STATE Sample $u \sim \mathrm{Uniform}(0,1)$
        \IF{$u \le \alpha_i$}
            \STATE accept $\hat{y}_{j+i}$ and set
            $\mathbf{x}_{1:j+i} \gets \mathbf{x}_{1:j+i-1} \circ \hat{y}_{j+i}$
        \ELSE
            \STATE sample 
            $
            \hat{y}_{j+i} \sim p_{\mathrm{AR}}(\cdot \mid \mathbf{x}_{1:j+i-1})
            $
            via Eq.~\ref{eq:resample} and set
            $\mathbf{x}_{1:j+i} \gets \mathbf{x}_{1:j+i-1} \circ \hat{y}_{j+i}$
        \ENDIF
        \IF{$\hat{y}_{j+i}$ is EOS}
            \STATE \textbf{break}
        \ENDIF
    \ENDFOR
    \STATE Update $j \gets \text{length}(\mathbf{x})$
\ENDWHILE
\end{algorithmic}
\end{algorithm}

\section{Experiment}
In this section, we conduct extensive experiments to evaluate the effectiveness and generalizability of \Framework{}. Our empirical study is organized around the following research questions: \vspace{-10pt} 
\begin{itemize} 
\setlength{\itemsep}{0pt} 
\item \textbf{RQ1:} How does \Framework{} perform on code generation tasks in terms of inference efficiency and draft acceptance distribution? 
\item \textbf{RQ2:} What is the impact of the Stage~\textit{\Roman{stage}} and how sensitive is it to hyperparameter choices? 
\item \textbf{RQ3:} How does \Framework{} support batch inference scalability? 
\item \textbf{RQ4:} Does \Framework{} endow dLLMs with new generative capabilities beyond standard denoising? 
\item \textbf{RQ5:} How does \Framework{} perform on mathematical reasoning benchmarks? 
\end{itemize}
\subsection{Experimental Settings}

\textbf{Datasets}.
For code generation, we train our draft model on the OpenCodeInstruct dataset~\cite{opencodeinstruct} and evaluate on HumanEval~\cite{humaneval}, MBPP~\cite{MBPP}, LiveCodeBench~\cite{lcb}, and the Python subset of CodeAlpaca~\cite{CodeAlpaca}.For the math reasoning setting, we train using data from UltraChat ~\cite{ultrachat} and ShareGPT~\cite{ShareGPT}, and evaluate on GSM8K~\cite{gsm8k}, Math500~\cite{Math500}, and Minerva Math ~\cite{Minervamath}.

\textbf{Models}.For code generation experiments, we adopt Open-dLLM \cite{opendLLM} as our base diffusion model and apply our continued-training procedure to obtain the draft model.For math reasoning, we start with Qwen2.5-0.5B-Instruct \cite{qwen2,qwen2.5}, modify it with a diffusion decoding head, and then apply our two-stage continued training to derive the corresponding draft model.

\textbf{Baselines}.We compare \Framework{} with state-of-the-art speculative decoding methods for which official training code is publicly available, including Medusa~\cite{Medusa}, Hydra~\cite{Hydra}, and EAGLE-3~\cite{Eagle-3}.

\textbf{Metrics}. Since \Framework{} preserves the original model weights and employs strict rejection sampling to guarantee lossless speculative decoding, we do not report accuracy-based metrics. Following standard practice in prior work on speculative decoding, we evaluate using two key metrics:
\begin{itemize}
\setlength{\itemsep}{0pt}
\item \textbf{Speedup Ratio}. The empirical end-to-end speedup compared with standard autoregressive decoding.
\item \textbf{Average Acceptance Length ($\tau$)}. The average number of tokens accepted from the draft per drafting–verification cycle, reflecting the effective number of tokens generated in each speculative step.
\end{itemize}
\subsection{Performance on Code Generation (RQ1)}
In this section, DEER is used with KV cache enabled.

\subsubsection{Overall Efficiency}

\begin{figure}[t]
    \centering

    \begin{subfigure}[b]{0.32\linewidth}
        \centering
        \includegraphics[width=\linewidth]{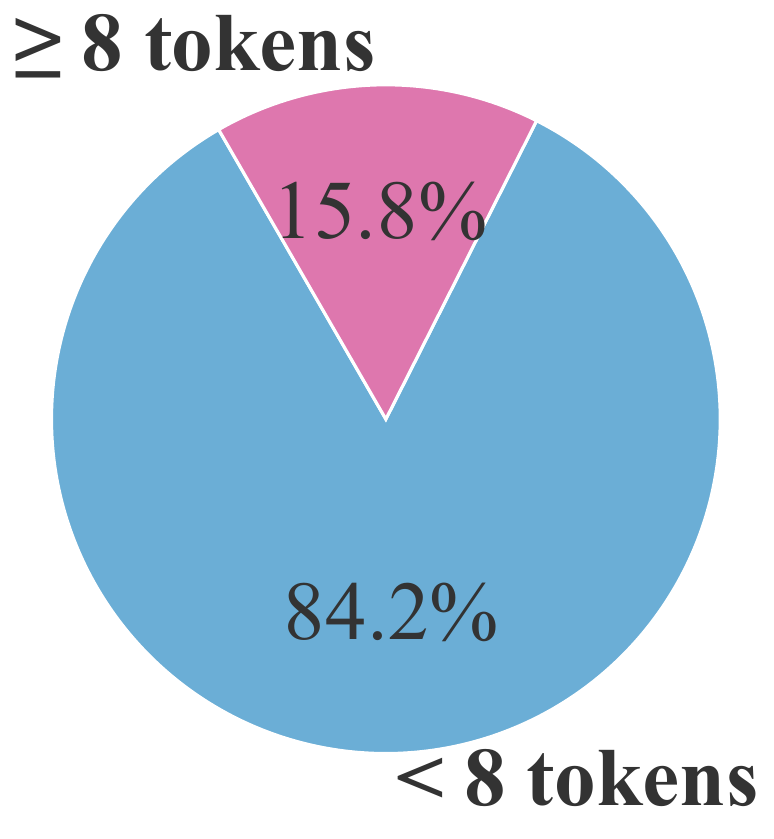}

        \caption{Qwen3-4B}
        \label{fig:pie_4b}
    \end{subfigure}
    \hfill
    \begin{subfigure}[b]{0.32\linewidth}
        \centering
        \includegraphics[width=\linewidth]{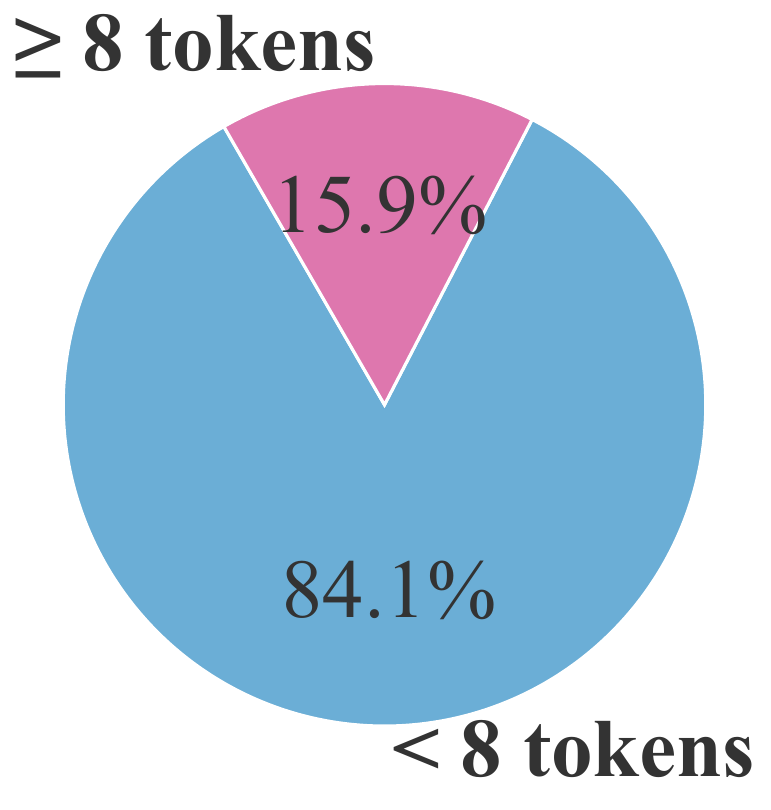}
        \caption{Qwen3-8B.}
        \label{fig:pie_8b}
    \end{subfigure}

    \begin{subfigure}[b]{0.32\linewidth}
        \centering
        \includegraphics[width=\linewidth]{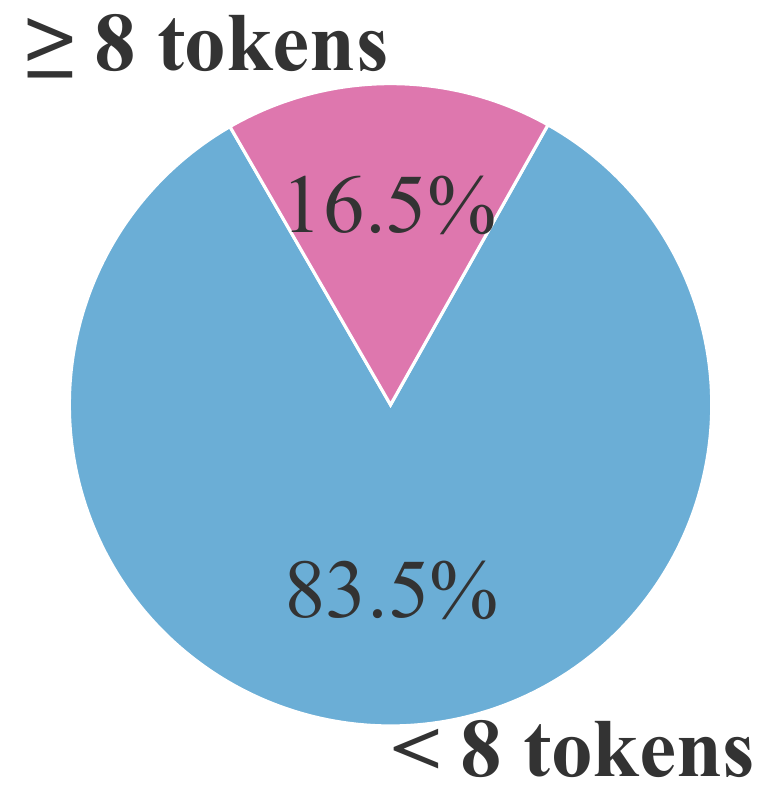}
        \caption{Qwen3-14B.}
        \label{fig:pie_14b}
    \end{subfigure}

    \caption{Proportion of short ($<$8 tokens) and long ($\ge$8 tokens) accepted tokens for different model.} 
    \label{fig:pie_all}
\end{figure}

As shown in Tables~\ref{tab:model_performance_comparison_tem06} and~\ref{tab:model_performance_comparison_tem0}, \Framework{} consistently outperforms state-of-the-art speculative decoding methods across all model scales and datasets, in both average acceptance length $\tau$ and end-to-end speedup.

For Qwen3-30B-A3B at temperature $=0$, \Framework{} achieves an average acceptance length of $5.03$, a $67\%$ increase over EAGLE-3 ($\tau = 3.05$). This demonstrates that speculative decoding can remain highly effective even for modern large models with complex, high-entropy vocabularies. Prior methods are constrained by cumulative left-to-right draft errors, which sharply limit acceptance lengths on such models. In contrast, \Framework{} greatly mitigates this error accumulation through its one-step block-generation mechanism, yielding a substantially more stable verification process.

Across model families, \Framework{} improves $\tau$ by $50\text{--}120\%$ relative to EAGLE-3, depending on model scale and task. On HumanEval, the acceptance length of Qwen3-30B-A3B under \Framework{} is more than twice that of EAGLE-3 ($6.58$ vs.\ $3.21$), and this longer acceptance directly translates into faster decoding: \Framework{} achieves up to $2\times$ the speedup of EAGLE-3 under the same setting. Overall, these results indicate that controlling draft error accumulation is crucial for achieving robust acceleration on contemporary LLMs, and that \Framework{} provides a practical way to do so.

\begin{table}[t]
  \centering
  \renewcommand{\tabcolsep}{5.0mm}
  \caption{Maximum accepted token lengths across models.}
  \label{tab:model_comparison_maxlenth}
  \begin{tabular}{lcc}
    \toprule
    Model & EAGLE-3 & \Framework{} \\
    \midrule
    Qwen3-4B       & 8 & \textbf{32} \\
    Qwen3-8B       & 8 & \textbf{32} \\
    Qwen3-14B      & 8 & \textbf{32} \\
    Qwen3-30B-A3B  & 7 & \textbf{32} \\
    \bottomrule
  \end{tabular}
\end{table}

\begin{table}[t]
  \centering
  \caption{Batch inference performance (tokens/s) on HumanEval across different batch sizes.}
  \label{tab:model_performance_batch}
  \begin{tabular}{lcccc}
    \toprule
    Method & Batch 2 & Batch 4 & Batch 8 & Batch 16 \\
    \midrule
    AR & 34.03 & 32.50 & 38.35 & 49.76 \\
    \Framework{}         & 82.97 & 103.95 & 159.87 & 175.66 \\
    \bottomrule
  \end{tabular}
\end{table}

\subsubsection{Acceptance Distribution Mechanism}
To understand the performance gains, 
we examine how \Framework{} reshapes the distribution of accepted token lengths, which directly reflects the model's ability to propose longer drafts for parallel verification.
As shown in Figure~\ref{fig:pie_all}, the probability of accepting drafts longer than 8 tokens consistently exceeds $15\%$ across all tested models, and this probability slightly increases with model scale,
from $15.8\%$ for Qwen3-4B to $16.6\%$ for Qwen3-30B-A3B. Table~\ref{tab:model_comparison_maxlenth} further shows \Framework{} attains a maximum acceptance length of 32 tokens, whereas EAGLE-3 is limited to $7\text{--}8$ tokens.

These distributions confirm that the one-step generation in \Framework{} effectively decouples token dependencies within each draft block, substantially reducing the left-to-right uncertainty accumulation that typically constrains speculative decoding. As a result, \Framework{} can reliably produce longer contiguous blocks of accepted tokens, leading to higher throughput and greater acceleration potential than conventional autoregressive drafters.

\begin{figure}[t]
\centering
\begin{tcolorbox}[colback=black!3,colframe=black,title=Quicksort Generation (Block Diffusion)]
\small

\textbf{Prompt:}\texttt{Write a Python function for quicksort.}

\vspace{0.25cm}
\textbf{Answer}:\\
\ttfamily
\textcolor{iter0}{def quicksort(arr):}\\
\textcolor{iter0}{\quad if len(arr) <= 1:}\\
\textcolor{iter0}{\quad\quad return arr}\\
\textcolor{iter0}{\quad pivot = arr[0]}\\
\textcolor{iter0}{\quad left = [x for x}
\textcolor{iter1}{in arr if x < pivot]}\\
\textcolor{iter1}{\quad middle = [x for x in arr if x == pivot]}\\
\textcolor{iter1}{\quad right = [x for x}
\textcolor{iter2}{> pivot]}\\
\textcolor{iter2}{\quad return quicksort(left) + middle + quicksort(right)}

\vspace{0.35cm}
\rmfamily

\textit{Color legend:} 
\textcolor{iter0}{Iteration 0} (initial completion), 
\textcolor{iter1}{Iteration 1} (refined extension), 
\textcolor{iter2}{Iteration 2} (final refinement).  
\end{tcolorbox}

\caption{Illustration of block-diffusion generation. Different colors represent 
tokens produced at successive denoising iterations, showing that the dLLM can 
extend partial code blocks without requiring a full-sentence prompt.}
\label{fig:quicksort_blockdiff}
\end{figure}

\begin{figure}[t]
    \centering
    \includegraphics[width=\linewidth]{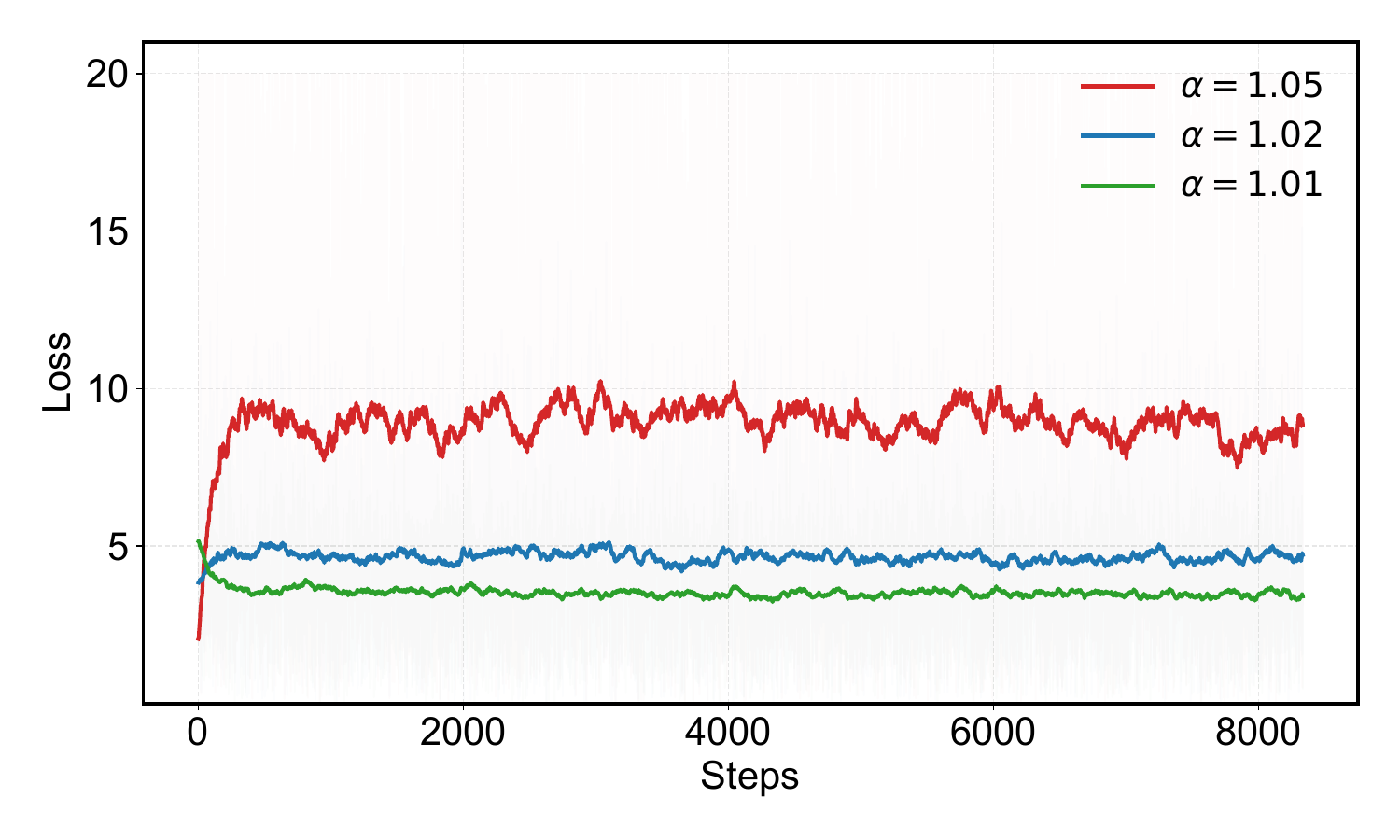}
    \caption{Sensitivity analysis of the exponential weighting factor $\alpha$ during Stage~\textit{II}.The figure illustrates how different values of the exponential weighting coefficient 
    $\alpha\in\{1.01,1.02,1.05\}$ influence the loss trajectory throughout the Quench Refinement process. Each curve is smoothed using an exponential moving average to highlight the underlying optimization dynamics while suppressing stochastic noise.}
    \label{fig:ana_alpha}
\end{figure}

\begin{table}[t]
\centering
\renewcommand{\tabcolsep}{8.0mm}
\caption{Performance of acceleration methods on the \textbf{Qwen3-30B-A3B} model (temperature = 0.6) across math benchmarks. Reported metrics are speedup (×) and Kendall’s $\tau$ correlation.}
\label{tab:model_performance_comparison_math_tem0.6}
\small
\renewcommand{\arraystretch}{1.2}

\begin{tabular}{l|c|c}
\toprule
\textbf{Metric} & \textbf{EAGLE3} & \textbf{\Framework{}} \\
\midrule
\rowcolor{gray!20}
\multicolumn{3}{c}{\textit{Math500}} \\
 Speedup & ×1.89 & ×2.12 \\
 $\tau$   & 2.04  & 2.45  \\
\midrule
\rowcolor{gray!20}
\multicolumn{3}{c}{\textit{GSM8K}} \\
 Speedup & ×1.92 & ×2.23 \\
 $\tau$   & 2.43  & 2.70  \\
\midrule
\rowcolor{gray!20}
\multicolumn{3}{c}{\textit{Minerva Math}} \\
 Speedup & ×1.91 & ×2.02 \\
 $\tau$   & 2.07  & 2.31  \\
\midrule
\rowcolor{gray!20}
\multicolumn{3}{c}{\textit{Mean (across benchmarks)}} \\
 Speedup & ×1.91 & ×2.12 \\
 $\tau$   & 2.18  & 2.47  \\
\bottomrule
\end{tabular}
\end{table}

\subsection{Ablation Study and Sensitivity Analysis (RQ2)}

In this section, we evaluate the role of Stage~\textit{\Roman{stage}} in shaping the dLLM into a 
more reliable draft generator and analyze the sensitivity of its key 
hyperparameters.

\subsubsection{Impact of Stage~\textit{\Roman{stage}}}

We measure the effect of Stage~\textit{\Roman{stage}} by comparing the average accepted-token 
lengths of models trained with and without refinement (Table~\ref{tab:quench_refine}). 
Enabling Stage~\textit{\Roman{stage}} consistently increase the number of accepted tokens across all 
four code-generation benchmarks: from 4.74 to 4.87 on MBPP, 3.47 to 4.04 on 
CodeAlpacaPy, 5.38 to 6.58 on HumanEval, and 3.87 to 5.03 on LiveCodeBench. 

The gap between the two settings grows with benchmark difficulty, with the 
largest reductions on HumanEval and LiveCodeBench (1.20 and 1.16 tokens). 
This suggests that the refinement stage encourages the dLLM to produce suffixes 
that are more tightly aligned with the AR teacher, especially in settings with 
more complex or long-range structure, resulting in more precise and reliable 
drafts.

\subsubsection{Hyperparameter Sensitivity}

Stage~\textit{\Roman{stage}} introduces position-dependent weights parameterized by a scaling 
factor $\alpha$, controling how strongly the loss emphasizes the most recent 
masked tokens. This exponential weighting makes training sensitive to the choice 
of~$\alpha$.

As shown in Figure~\ref{fig:ana_alpha}, when $\alpha = 1.01$, optimization is 
stable and the loss decreases smoothly. Increasing $\alpha$ to $1.02$ yields 
noticeably noisier training curves with slight upward drift, and setting 
$\alpha = 1.05$ leads to early divergence. These results indicate that Stage~\textit{\Roman{stage}} 
has a relatively narrow stability window: overly aggressive weighting amplifies 
gradients near the masked boundary and destabilizes optimization. Within the 
stable regime, however, the refinement stage consistently improves suffix 
alignment without requiring additional data or extended training.

\subsection{Batch Inference Scalability (RQ3)}

We evaluate \Framework{}'s batch inference performance on HumanEval by measuring throughput (tokens/s) under different batch sizes. Since there is currently no mature framework for efficient dLLM+KV-cache deployment, both the baseline autoregressive decoding and \Framework{} are run \emph{without} KV-cache, so the comparison reflects raw model.

Table~\ref{tab:model_performance_batch} reports results for batch sizes 2, 4, 8 and 16. Across all settings, \Framework{} yields substantial speedups over the autoregressive baseline and scales well with batch size. For example, at batch size 8, \Framework{} reaches 159.87 tokens/s, nearly $4\times$ the baseline throughput of 38.35 tokens/s.

These results show that improvements in per-step acceptance length translate into batch-level acceleration: one-step draft generation combined with parallel verification allows \Framework{} to better exploit GPU parallelism, especially at larger batch sizes. Modest efficiency gaps at smaller batch sizes are mainly due to fixed speculative-decoding overheads, which become negligible as batch size increases.

\subsection{Generative Capabilities (RQ4)}

We observe an interesting emergent behavior in dLLMs trained with our \Framework{}: the models are able to perform \emph{reliable block regeneration}. As shown in Figure~\ref{fig:quicksort_blockdiff}, the model can extend an incomplete code segment purely from its local prefix, generating new tokens in a diffusion-style manner without requiring a full-sentence prompt.

Different colors in Figure~\ref{fig:quicksort_blockdiff} indicate tokens produced at different denoising iterations, highlighting how the model incrementally refines and extends the partial block. This illustrates that \Framework{} enables dLLMs to treat block-level continuation as a natural generation mode, even without any architectural modifications such as padding tokens or altered attention patterns.

\subsection{Performance on Mathematical Reasoning (RQ5)}
Since no pretrained dLLMs are publicly available for mathematical-reasoning tasks, we construct our draft model by converting Qwen2.5-0.5B-Instruct into a diffusion model and training it for 40 epochs on the UltraChat dataset~\cite{ultrachat}. This yields only a partially converged dLLM—its standalone generations are not yet semantically reliable—yet \Framework{} still achieves consistent acceleration improvements on mathematical benchmarks.

Table~\ref{tab:model_performance_comparison_math_tem0.6} shows that, despite the weakly trained draft model, \Framework{} surpasses EAGLE-3 across all datasets. On Math500, \Framework{} improves speedup from 1.89× → 2.12× (+12.2\%) and increases the acceptance length from 2.04 → 2.45 (+20.1\%). Similar gains hold for GSM8K (speedup 1.92× → 2.23×, $\tau$ 2.43 → 2.70) and Minerva Math (speedup 1.91× → 2.02×, $\tau$ 2.07 → 2.31). Averaged over the three datasets, \Framework{} delivers a mean speedup of 2.12×, outperforming EAGLE-3’s 1.91×, with an average acceptance length of 2.47, compared to 2.18 for EAGLE-3.

These results demonstrate that \Framework{} generalizes beyond code generation and remains effective even when the underlying dLLM is far from convergence. This suggests that the one-step drafting mechanism produces stable, high-quality proposals for the verifier, enabling reliable speculative decoding in domains where fully trained diffusion-based LLMs are not yet available.

\section{Conclusion}

We presented \textbf{\Framework{}}, a speculative decoding framework that uses a discrete dLLM as the sole drafter, avoiding the left-to-right uncertainty accumulation of autoregressive drafters. To make dLLMs suitable for prefix-conditioned continuation, we introduced a \textit{Diffusion-to-AR Alignment} pipeline that combines AR-style distillation with a lightweight refinement stage near the prefix boundary. 
On multiple code-generation benchmarks and model scales, \Framework{} yields longer accepted blocks and consistent speedups, even without KV caching, demonstrating dLLMs as a practical, highly parallelizable alternative for efficient LLM decoding.

\bibliography{example_paper}
\bibliographystyle{icml2026}

\appendix
\onecolumn

\section{Details of Experimental Settings}
\label{app:exp_details}

All experiments are conducted on a cluster equipped with eight NVIDIA A100 GPUs with 80 GB of memory each.

\paragraph{Draft model for code tasks.}
For code generation benchmarks, we use a 0.5B-parameter diffusion drafter obtained by modifying the open-dCoder checkpoint into a discrete diffusion language model. 
In Stage~\textit{I} (AR-style continuation distillation), we optimize the drafter with the AdamW optimizer, using a learning rate of $1\times10^{-4}$ and training for 1 epoch over the code training corpus. 
In Stage~\textit{II} (prefix-conditioned refinement), we continue training the same drafter with AdamW, a learning rate of $5\times10^{-5}$, and 1 epoch on a subset of 100k examples.

\paragraph{Draft model for math tasks.}
For mathematical reasoning benchmarks, we also use a 0.5B-parameter drafter, initialized from Qwen2.5-0.5B. We convert the original autoregressive checkpoint into a diffusion language model and train it on the UltraChat dataset~\cite{ultrachat} for 40 epochs.
In Stage~\textit{I}, we apply AR-style continuation distillation with AdamW, a learning rate of $1\times10^{-4}$, and train for 5 epochs. 
In Stage~\textit{II}, we perform the refinement stage with AdamW, using a learning rate of $1\times10^{-4}$ for 1 additional epoch.

\section{More Sensitivity Analysis and Experiment}

\subsection{Details of Accept lenth}

As shown in Figure~\ref{fig:combined}, we observe an intriguing pattern in the empirical acceptance-length distribution. For accepted lengths below 30, DEER exhibits an approximately exponential decay, which is consistent with the behavior reported for most speculative decoding methods. However, once the acceptance length exceeds 30 and approaches the maximum range, the probability mass starts to increase again, and this resurgence is quite pronounced. We refer to this phenomenon as the \emph{long-block resurgence effect}. We argue that this effect provides further evidence for our motivation regarding uncertainty accumulation: when later draft tokens are no longer conditioned on earlier draft tokens from the drafter, they are less exposed to left-to-right error propagation and can therefore support substantially longer accepted drafts.

\begin{figure}[h!] 
    
\begin{subfigure}{0.32\textwidth}
    \centering
    \includegraphics[width=\linewidth]{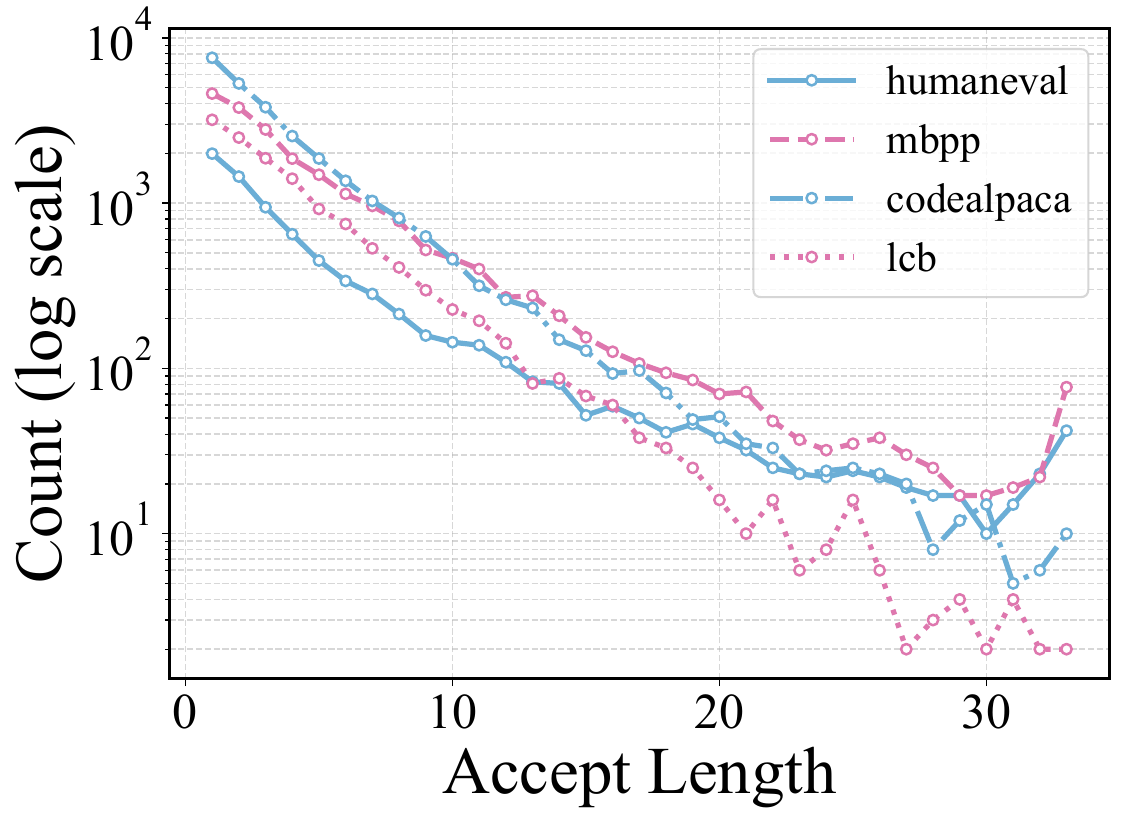}
    \caption{Qwen3-8B}
    \label{fig:sub2_line}
\end{subfigure}\hfill
\begin{subfigure}{0.32\textwidth}
    \centering
    \includegraphics[width=\linewidth]{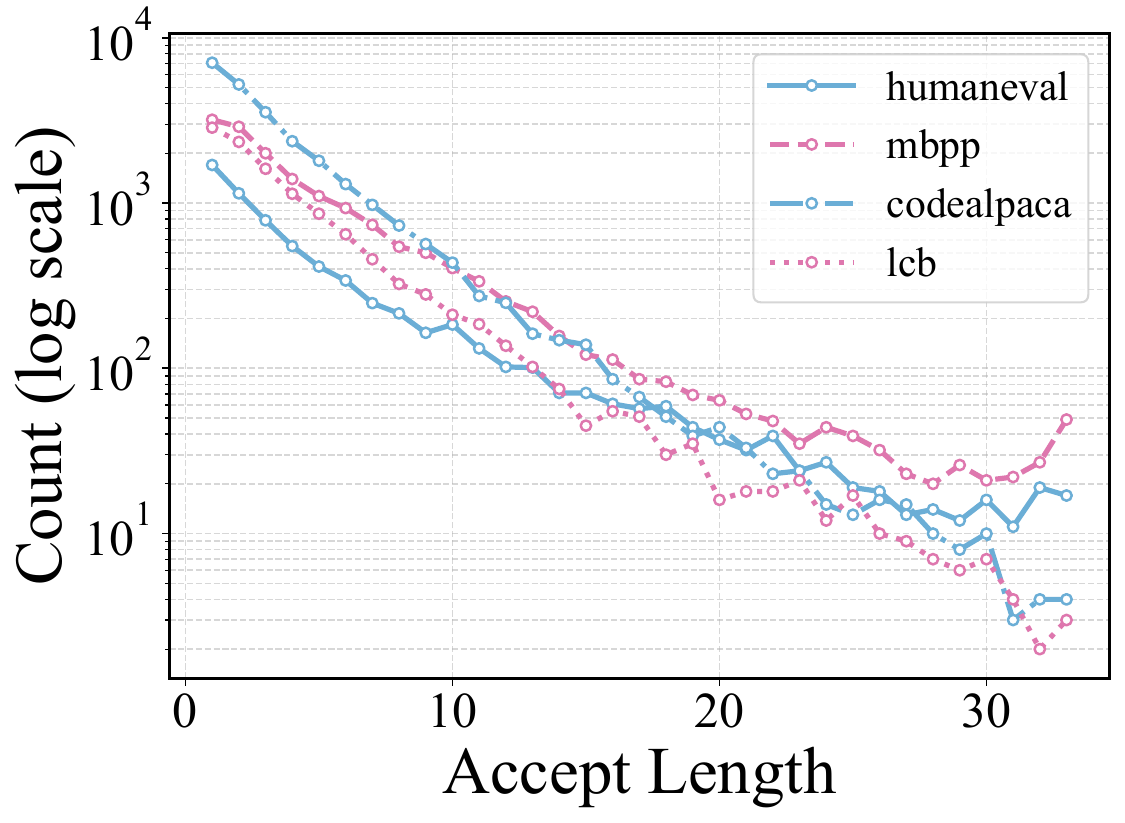}
    \caption{Qwen3-14B}
    \label{fig:sub3_line}
\end{subfigure}\hfill
\begin{subfigure}{0.32\textwidth}
    \centering
    \includegraphics[width=\linewidth]{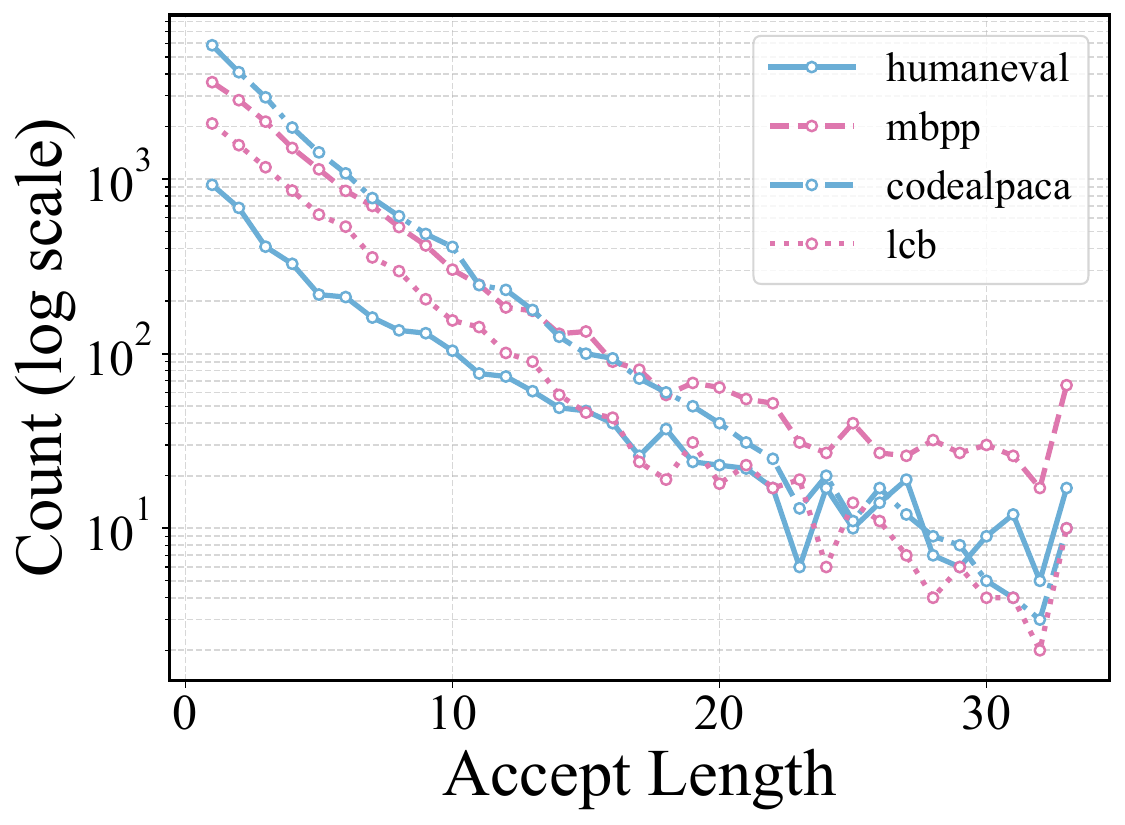}
    \caption{Qwen3-30B-A3B}
    \label{fig:sub4_line}
\end{subfigure}

\caption{
Token-length distribution of accuracy differences across four code benchmarks for Qwen3 models.
(a)--(c) correspond to Qwen3-8B, Qwen3-14B, and Qwen3-30B-A3B, respectively.
The x-axis denotes the number of tokens and the y-axis shows the frequency (log scale).
}
\label{fig:combined}

\end{figure}

\subsection{Sensitivity Analysis of block size}

As shown in Figure~\ref{fig:combined_acc}, the average acceptance length consistently increases with larger block sizes, while the growth rate gradually slows down, indicating a trade-off between longer blocks and the corresponding computation overhead. Furthermore, the scaling of the backbone models significantly enhances the acceptance behavior. Across block sizes from 4 to 32, Qwen3-14B improves acceptance length by \textbf{5\%--14\%} over Qwen3-8B, while Qwen3-30B-A3B delivers further gains of \textbf{14\%--33\%}. We attribute this improvement to stronger output determinism in larger base models, which enables DLLM to more effectively fit deterministic token patterns. Therefore, we expect our acceleration method to yield even greater benefits when deployed on larger-scale LLMs.

\begin{figure}[ht!] 
    
\begin{subfigure}{0.32\textwidth}
    \centering
    \includegraphics[width=\linewidth]{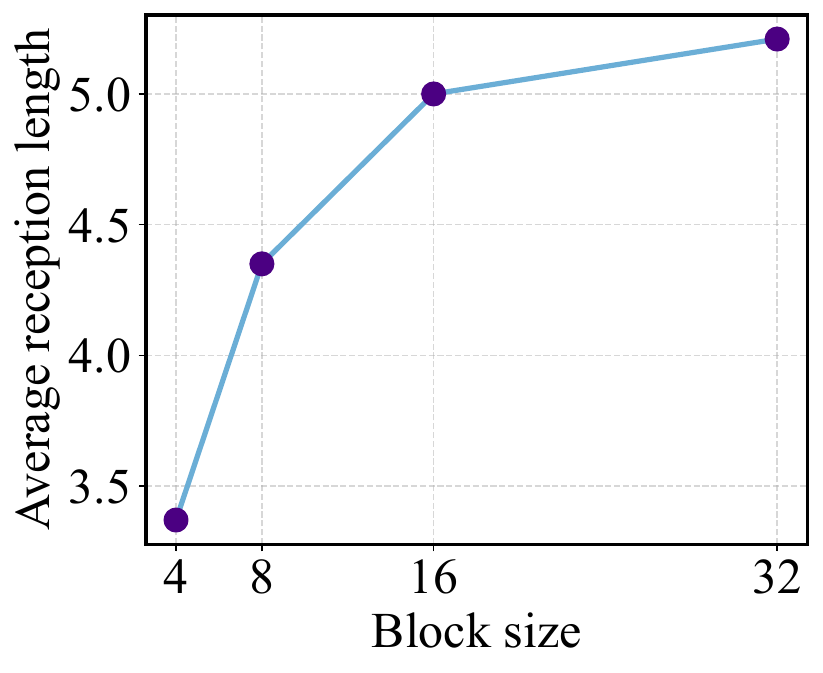}
    \caption{Qwen3-8B}
    \label{fig:sub2}
\end{subfigure}\hfill
\begin{subfigure}{0.32\textwidth}
    \centering
    \includegraphics[width=\linewidth]{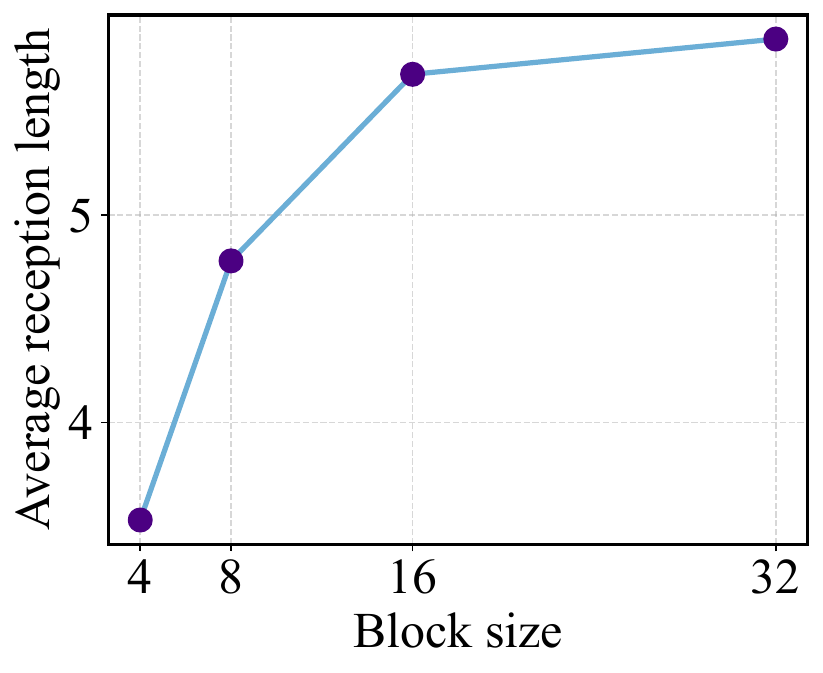}
    \caption{Qwen3-14B}
    \label{fig:sub3}
\end{subfigure}\hfill
\begin{subfigure}{0.32\textwidth}
    \centering
    \includegraphics[width=\linewidth]{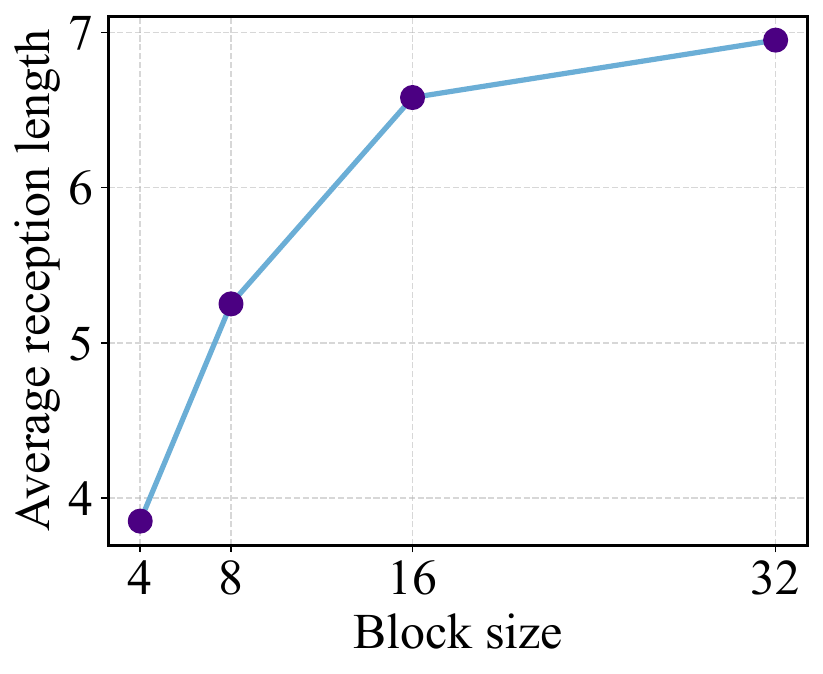}
    \caption{Qwen3-30B-A3B}
    \label{fig:sub4}
\end{subfigure}

\caption{
Block-size distribution of average acceptance lengths for Qwen3 models evaluated on the HumanEval benchmark.
Subfigures (a)--(c) correspond to Qwen3-8B, Qwen3-14B, and Qwen3-30B-A3B, respectively.
The x-axis represents the block size, while the y-axis denotes the average acceptance length.
}
\label{fig:combined_acc}

\end{figure}

\section{Additional Analysis of Drafter Size}
\label{app:drafter_size}
One potential concern is that the gains of DEER might stem from using a much larger drafter than prior speculative decoding methods such as EAGLE-3, rather than from the proposed drafting strategy itself. To examine this factor, we compare the parameter scales of the drafters used by EAGLE-3 and DEER under the same target backbones. The results are summarized in Table~\ref{tab:drafter_size_comparison}.

As shown in Table~\ref{tab:drafter_size_comparison}, the drafter sizes of EAGLE-3 and DEER are of the same order of magnitude (hundreds of millions of parameters) and remain well below the corresponding target model sizes. For the 8B backbone, DEER uses a slightly larger drafter than EAGLE-3 (470M vs.\ 400M), whereas for the 14B backbone EAGLE-3 employs an even larger drafter (610M vs.\ 470M). For the 30B backbone, DEER again uses a larger drafter (470M vs.\ 140M), but the drafter still accounts for only a small fraction of the 30B target. Under these matched-capacity regimes, DEER nevertheless achieves substantially longer average acceptance lengths (see Table~\ref{tab:model_performance_comparison_tem0}) and higher maximum accepted block sizes (see Table~\ref{tab:model_comparison_maxlenth}) than EAGLE-3. This indicates that our improvements primarily come from the discrete diffusion drafting mechanism and the proposed alignment pipeline, rather than from simply scaling up the drafter model.

\begin{table}[t]
    \centering
    \caption{Comparison of drafter sizes between EAGLE-3 and DEER under matched target models.}
    \label{tab:drafter_size_comparison}
    \begin{tabular}{lcccc}
        \toprule
        Target model & Target params (B) & Method   & Drafter type      & Drafter params (M) \\
        \midrule
        Qwen3-8B     & 8.0              & EAGLE-3  & AR draft head     & 400      \\
                     &                  & DEER     & discrete DLLM     & 470       \\
        \midrule
        Qwen3-14B    & 14.0             & EAGLE-3  & AR draft head     & 610     \\
                     &                  & DEER     & discrete DLLM     & 470      \\
        \midrule
        Qwen3-30B-A3B & 30.0            & EAGLE-3  & AR draft head     & 140     \\
                     &                  & DEER     & discrete DLLM     & 470                \\
        \bottomrule
    \end{tabular}
    
    \vspace{0.3em}
    {\footnotesize
    Note: Drafter parameter counts for EAGLE-3 are taken from the original paper. 
    DEER uses a 0.5B-parameter discrete diffusion model as the drafter for all target models.
    }
\end{table}

\section{Training Cost Comparison}
\label{app:training_time}

We further compare the fine-tuning cost of different speculative decoding drafters. In particular, we report the \emph{approximate} training time for Medusa, Hydra, EAGLE-3, and DEER on the Qwen3-8B backbone, and attempt to scale the same configurations to Qwen3-14B. The results are summarized in Table~\ref{tab:training_time}.

As shown in Table~\ref{tab:training_time}, all four methods can be trained on Qwen3-8B within a comparable range of GPU hours. However, when moving to the Qwen3-14B backbone, both Medusa and Hydra encounter out-of-memory (OOM) errors under their default publicly released configurations in our setup, even after standard tuning of batch size and sequence length. Consequently, we only report training times for EAGLE-3 and DEER on Qwen3-14B, and exclude Medusa and Hydra from 14B-scale efficiency comparisons.

\begin{table}[t]
    \centering
    \caption{Approximate fine-tuning time of different speculative decoding drafters on Qwen3 backbones.}
    \label{tab:training_time}
    \begin{tabular}{l l c c}
        \toprule
        Target model & Method & Training time (GPU hours) & Status \\
        \midrule
        Qwen3-8B  & Medusa & 768  & succeeds \\
                   & Hydra  & 1800 & succeeds \\
                   & EAGLE-3 & 696 & succeeds \\
                   & DEER   & 240  & succeeds \\
        \midrule
        Qwen3-14B & Medusa & --   & OOM (default config) \\
                   & Hydra  & --   & OOM (default config) \\
                   & EAGLE-3 & 1440 & succeeds \\
                   & DEER   & 240  & succeeds \\
        \bottomrule
    \end{tabular}
    
    \vspace{0.3em}
    {\footnotesize
    Note: GPU hours are approximate wall-clock measurements under our training setup 
    using the official or default configurations released for each method. For Qwen3-14B, 
    Medusa and Hydra run out of memory (OOM) under their default settings in our 
    environment, so we only report training times for EAGLE-3 and DEER.
    }
\end{table}

\section{Correctness Proof of the DEER Speculative Decoding Algorithm}
\label{app:lossless}

In this section, we provide a formal proof that DEER is \emph{lossless}, i.e., it produces exactly the same output distribution as sampling directly from the target autoregressive (AR) model $p_{\mathrm{AR}}$.

\subsection{A One-Step Draft--Then--Verify Lemma}

We first analyze a single decoding position. Fix a time step $j+i$ and a realized prefix $\mathbf{x}_{1:j+i-1}$. For convenience, define the target conditional distribution at position $j+i$ as
\[
p(x)
\;\triangleq\;
p_{\mathrm{AR}}\bigl(x_{j+i}=x \mid \mathbf{x}_{1:j+i-1}\bigr),
\]
and let
\[
P(x)
\;\triangleq\;
P\bigl(x_{j+i}=x\bigr)
\]
be an \emph{arbitrary} proposal distribution over the same vocabulary. We only require the standard support condition
\begin{equation}
\label{eq:support-condition-app}
p(x) > 0 \;\Rightarrow\; P(x) > 0
\quad\text{for all } x,
\end{equation}
i.e., the proposal never assigns zero probability where the target is positive.

Following the classical speculative decoding analysis~\citep{SPS}, we define the pointwise overlap
\[
m(x)
\;\triangleq\;
\min\bigl(p(x), P(x)\bigr),
\]
its total mass
\[
\gamma
\;\triangleq\;
\sum_x m(x),
\]
and the \emph{residual} distribution
\begin{equation}
\label{eq:residual-dist-app}
p_{\mathrm{res}}(x)
\;\triangleq\;
\frac{p(x) - m(x)}{1 - \gamma}
\quad\text{for } 1-\gamma > 0,
\end{equation}
with any arbitrary definition (e.g., $p_{\mathrm{res}}=p$) in the degenerate case $\gamma = 1$ (note that then the residual branch is never used).

\begin{lemma}[One-step lossless speculative sampling]
\label{lem:one-step-app}
Consider the following sampling procedure for the token at position $j+i$ given $\mathbf{x}_{1:j+i-1}$:
\begin{enumerate}
    \item Draw a draft token $Y \sim P(\cdot)$.
    \item Define the acceptance probability
    \begin{equation}
    \label{eq:alpha-min-app}
        \alpha(Y)
        \;=\;
        \min\!\Bigl(1, \tfrac{p(Y)}{P(Y)}\Bigr)
        \;=\;
        \frac{m(Y)}{P(Y)}.
    \end{equation}
    \item Draw $U \sim \mathrm{Uniform}(0,1)$ independently.
    \begin{itemize}
        \item If $U \le \alpha(Y)$, \emph{accept} the draft and set $Z = Y$.
        \item Otherwise, \emph{reject} the draft and set $Z \sim p_{\mathrm{res}}(\cdot)$ as in~\eqref{eq:residual-dist-app}.
    \end{itemize}
\end{enumerate}
Then, for every token $a$ in the vocabulary,
\[
\mathbb{P}\bigl[Z = a \mid \mathbf{x}_{1:j+i-1}\bigr] \;=\; p(a),
\]
i.e., the final token $Z$ has exactly the target distribution $p(\cdot)$ at position $j+i$.
\end{lemma}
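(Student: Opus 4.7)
The plan is to verify the claim by direct case analysis on the accept/reject event and then summing the two contributions to $\mathbb{P}[Z=a \mid \mathbf{x}_{1:j+i-1}]$. I would first suppress the conditioning on $\mathbf{x}_{1:j+i-1}$ in the notation (everything below is conditional on this prefix) and treat $p$, $P$, $m$, $\gamma$, and $p_{\mathrm{res}}$ as fixed objects determined by the prefix.

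Step one: compute the joint probability of accepting a specific token. Using independence of $Y \sim P$ and $U \sim \mathrm{Uniform}(0,1)$ together with the explicit form of $\alpha$ from~\eqref{eq:alpha-min-app}, I would write
\begin{equation*}
\mathbb{P}[Z = a,\ \text{accept}]
\;=\; P(a)\,\alpha(a)
\;=\; P(a)\cdot \frac{m(a)}{P(a)}
\;=\; m(a),
\end{equation*}
which uses the support condition~\eqref{eq:support-condition-app} to ensure $P(a) > 0$ whenever the ratio matters; if $P(a) = 0$ then $m(a) = 0$ as well, so the identity still holds trivially.

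Step two: handle the rejection branch. The marginal rejection probability is
\begin{equation*}
\mathbb{P}[\text{reject}]
\;=\; \sum_y P(y)\bigl(1 - \alpha(y)\bigr)
\;=\; \sum_y \bigl(P(y) - m(y)\bigr)
\;=\; 1 - \gamma.
\end{equation*}
Because the resample $Z \sim p_{\mathrm{res}}$ is drawn independently of $Y$ given the rejection event, I can then invoke the definition~\eqref{eq:residual-dist-app} to get
\begin{equation*}
\mathbb{P}[Z = a,\ \text{reject}]
\;=\; (1 - \gamma)\cdot p_{\mathrm{res}}(a)
\;=\; p(a) - m(a).
\end{equation*}
Summing the accept and reject contributions yields $\mathbb{P}[Z=a] = m(a) + (p(a) - m(a)) = p(a)$, which is exactly the claim.

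The only subtlety is the degenerate case $\gamma = 1$, in which $p_{\mathrm{res}}$ is not canonically defined. I would dispatch this at the end by observing that $\gamma = 1$ forces $m(x) = p(x) = P(x)$ pointwise (since $\sum_x \min(p(x),P(x)) = 1$ only when the two distributions coincide), so $\mathbb{P}[\text{reject}] = 0$ and the residual branch contributes zero mass regardless of how $p_{\mathrm{res}}$ is chosen; the accept contribution $m(a)$ alone already equals $p(a)$. I do not expect a real obstacle here — the proof is essentially the standard speculative-sampling verification of \citet{SPS} rewritten in the $(m, \gamma, p_{\mathrm{res}})$ notation — but being explicit about the support condition and the $\gamma = 1$ edge case is what keeps the argument airtight and will matter when this lemma is chained across positions $i = 1, \dots, k$ in the full block-wise correctness theorem.
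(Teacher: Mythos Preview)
Your proof is correct and follows essentially the same accept/reject decomposition as the paper: both compute the accept contribution as $P(a)\alpha(a)=m(a)$, the rejection probability as $1-\gamma$, and the reject contribution as $(1-\gamma)p_{\mathrm{res}}(a)=p(a)-m(a)$, then sum. Your explicit treatment of the $P(a)=0$ and $\gamma=1$ edge cases is slightly more careful than the paper's proof (which relegates the $\gamma=1$ case to a parenthetical remark in the setup), but the underlying argument is identical.
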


\begin{proof}
Fix any token $a$. The probability that the procedure outputs $a$ can be decomposed into two disjoint events:
(i) $a$ is drafted and accepted, and
(ii) the draft is rejected and $a$ is obtained from the residual distribution:
\begin{align*}
\mathbb{P}[Z = a]
&=
\underbrace{\mathbb{P}[\text{accept and } Z=a]}_{\text{draft accepted}}
+
\underbrace{\mathbb{P}[\text{reject}]\,\mathbb{P}[Z=a \mid \text{reject}]}_{\text{draft rejected}}.
\end{align*}
For the first term, using~\eqref{eq:alpha-min-app} we have
\[
\mathbb{P}[\text{accept and } Z=a]
=
\mathbb{P}[Y=a]\,\alpha(a)
=
P(a)\,\frac{m(a)}{P(a)}
=
m(a).
\]
For the rejection probability,
\[
\mathbb{P}[\text{reject}]
=
1 - \sum_x \mathbb{P}[Y=x]\,\alpha(x)
=
1 - \sum_x P(x)\,\frac{m(x)}{P(x)}
=
1 - \sum_x m(x)
=
1 - \gamma.
\]
Conditioned on rejection, $Z$ is drawn from $p_{\mathrm{res}}$ in~\eqref{eq:residual-dist-app}, so
\[
\mathbb{P}[Z=a \mid \text{reject}]
=
p_{\mathrm{res}}(a)
=
\frac{p(a) - m(a)}{1-\gamma}.
\]
Putting everything together,
\begin{align*}
\mathbb{P}[Z = a]
&=
m(a)
+
(1-\gamma)\,\frac{p(a) - m(a)}{1-\gamma}
\\
&=
m(a) + p(a) - m(a)
\\
&=
p(a),
\end{align*}
which proves the claim.
\end{proof}

Lemma~\ref{lem:one-step-app} shows that for \emph{any} proposal $P(x_{j+i})$ satisfying the support condition~\eqref{eq:support-condition-app}, the draft--then--verify step with acceptance probability $\alpha(\cdot)$ and residual distribution $p_{\mathrm{res}}(\cdot)$ is exactly lossless.

\subsection{Instantiating the Proposal with the DEER Drafter}

We now instantiate Lemma~\ref{lem:one-step-app} with the proposal used in DEER. At position $j+i$, given the prefix $\mathbf{x}_{1:j+i-1}$, the target conditional is
\[
p(x)
=
p_{\mathrm{AR}}\bigl(x_{j+i}=x \mid \mathbf{x}_{1:j+i-1}\bigr),
\]
while the DEER drafter proposes tokens according to the diffusion model
\[
P(x)
=
q_\theta\bigl(x_{j+i}=x \mid \mathbf{x}_{1:j}\bigr).
\]
Our training in Section~3.1 ensures that whenever
\[
p_{\mathrm{AR}}\bigl(x_{j+i}=x \mid \mathbf{x}_{1:j+i-1}\bigr) > 0,
\]
we also have
\[
q_\theta\bigl(x_{j+i}=x \mid \mathbf{x}_{1:j}\bigr) > 0,
\]
so the support condition~\eqref{eq:support-condition-app} holds.

Plugging these $p$ and $P$ into Lemma~\ref{lem:one-step-app}, the acceptance probability takes the familiar form
\begin{equation}
\label{eq:alpha-deer-app}
\alpha_i
=
\min\!\left(
1,\;
\frac{
p_{\mathrm{AR}}(\hat{y}_{j+i} \mid \mathbf{x}_{1:j+i-1})
}{
q_\theta(\hat{y}_{j+i} \mid \mathbf{x}_{1:j})
}
\right),
\end{equation}
which is exactly Eq.~\eqref{eq:accept} in the main text. When the draft token at position $j+i$ is rejected, Lemma~\ref{lem:one-step-app} tells us that it should be replaced by a sample from the corresponding residual distribution
\[
p_{\mathrm{res}, j+i}(x)
=
\frac{
p_{\mathrm{AR}}(x \mid \mathbf{x}_{1:j+i-1})
-
\min\!\Bigl(
p_{\mathrm{AR}}(x \mid \mathbf{x}_{1:j+i-1}),
q_\theta(x \mid \mathbf{x}_{1:j})
\Bigr)
}{
1 - \sum_{x'}
\min\!\Bigl(
p_{\mathrm{AR}}(x' \mid \mathbf{x}_{1:j+i-1}),
q_\theta(x' \mid \mathbf{x}_{1:j})
\Bigr)
}.
\]
Conceptually, this replacement step corresponds to Eq.~\eqref{eq:resample} in the main text:
\begin{equation}
\hat{y}_{j+i}
\sim
p_{\mathrm{AR}}(\cdot \mid \mathbf{x}_{1:j+i-1}),
\end{equation}
together with the standard residual construction of speculative decoding~\citep{SPS}. Under this interpretation, Lemma~\ref{lem:one-step-app} directly implies that for every position $j+i$,
\[
\mathbb{P}\bigl[x_{j+i} = x \mid \mathbf{x}_{1:j+i-1}\bigr]
=
p_{\mathrm{AR}}\bigl(x_{j+i} = x \mid \mathbf{x}_{1:j+i-1}\bigr),
\]
i.e., the conditional distribution of the final token matches the AR model exactly.

\subsection{Sequence-Level Losslessness of DEER}

Finally, we extend the one-step result to the whole generated sequence.

\begin{theorem}[Losslessness of DEER decoding]
\label{thm:deer-lossless-app}
Let $p_{\mathrm{AR}}$ denote the target autoregressive model. Consider running DEER decoding with drafter $q_\theta$ and acceptance/resampling rules given by Eqs.~\eqref{eq:accept} and~\eqref{eq:resample}. Then for any finite sequence $\mathbf{x}_{1:T}$,
\[
\mathbb{P}_{\mathrm{DEER}}(\mathbf{x}_{1:T})
=
\prod_{t=1}^{T}
p_{\mathrm{AR}}\bigl(x_t \mid \mathbf{x}_{1:t-1}\bigr),
\]
i.e., DEER produces exactly the same joint distribution over outputs as direct autoregressive sampling from $p_{\mathrm{AR}}$.
\end{theorem}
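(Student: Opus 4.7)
The plan is to lift the per-position guarantee of Lemma~\ref{lem:one-step-app} into a joint statement by a straightforward induction on the output index $t$, using the chain rule to decompose the DEER joint law into its $T$ conditionals and matching each conditional to the corresponding AR factor. Fix the prompt and any sequence $\mathbf{x}_{1:T}$, and for each $1 \le t \le T$ let $j(t)$ denote the starting index of the speculative block containing position $t$, so that $t = j(t) + i(t)$ for some $1 \le i(t) \le k$.

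For the inductive step, assume the first $t-1$ outputs of DEER have joint law $\prod_{s=1}^{t-1} p_{\mathrm{AR}}(x_s \mid \mathbf{x}_{1:s-1})$, and I will show that the DEER conditional at position $t$ given the realized prefix $\mathbf{x}_{1:t-1}$ equals $p_{\mathrm{AR}}(\cdot \mid \mathbf{x}_{1:t-1})$. Writing $j = j(t)$ and $i = i(t)$, the algorithm at this iteration uses the $i$-th coordinate $\hat{y}_t$ of the block draft $\hat{\mathbf{y}}_{j+1:j+k} \sim q_\theta(\cdot \mid \mathbf{x}_{1:j})$, computes the acceptance probability $\alpha_i$ via Eq.~\eqref{eq:alpha-deer-app}, and either accepts $\hat{y}_t$ or draws $x_t$ from the residual distribution of Eq.~\eqref{eq:resample}. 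This is exactly the procedure of Lemma~\ref{lem:one-step-app} applied with target $p(\cdot) = p_{\mathrm{AR}}(\cdot \mid \mathbf{x}_{1:t-1})$ and proposal $P(\cdot) = q_\theta(\cdot \mid \mathbf{x}_{1:j})$, provided two conditions hold: the support condition~\eqref{eq:support-condition-app}, and the requirement that $\hat{y}_t$ is genuinely drawn from $P$ \emph{given the realized prefix} $\mathbf{x}_{1:t-1}$.

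The support condition is handed to us by the training assumption already noted above. The second requirement is the only real subtlety, and is precisely where the distinction between diffusion and AR drafters matters. Because $\mathbf{x}_{1:t-1}$ depends on the earlier coordinates $\hat{\mathbf{y}}_{j+1:t-1}$ through the preceding accept/reject decisions, one must verify that conditioning on $\mathbf{x}_{1:t-1}$ does not distort the marginal of $\hat{y}_t$. For one-step masked discrete diffusion sampling, the block draft factorizes per position as $q_\theta(\hat{\mathbf{y}}_{j+1:j+k} \mid \mathbf{x}_{1:j}) = \prod_{r=1}^{k} q_\theta(\hat{y}_{j+r} \mid \mathbf{x}_{1:j})$, so the draft coordinates are mutually independent given $\mathbf{x}_{1:j}$. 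Hence $\hat{y}_t$ is independent of the earlier draft coordinates (and of the independent uniforms and residual draws used to produce $\mathbf{x}_{j+1:t-1}$) given $\mathbf{x}_{1:j}$, which implies that the conditional law of $\hat{y}_t$ given $\mathbf{x}_{1:t-1}$ remains $q_\theta(\cdot \mid \mathbf{x}_{1:j})$ as required.

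With both conditions verified, Lemma~\ref{lem:one-step-app} yields $\mathbb{P}_{\mathrm{DEER}}(x_t \mid \mathbf{x}_{1:t-1}) = p_{\mathrm{AR}}(x_t \mid \mathbf{x}_{1:t-1})$; multiplying this against the inductive hypothesis advances the induction, and iterating to $t = T$ produces the theorem. The hard part is squarely the position-wise independence argument above: for a classical AR drafter the analogous identity $q^{\mathrm{AR}}(\hat{y}_t \mid \mathbf{x}_{1:j}, \hat{\mathbf{y}}_{j+1:t-1}) = q^{\mathrm{AR}}(\hat{y}_t \mid \mathbf{x}_{1:j})$ is false, which is exactly the ``uncertainty accumulation'' phenomenon discussed in Section~3.2 that the factorized structure of one-step diffusion is designed to eliminate, so it is this structural property that ultimately lets the sequence-level lossless guarantee close.
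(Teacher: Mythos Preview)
Your proof is correct and follows the same inductive skeleton as the paper's own argument: chain rule on $t$, then Lemma~\ref{lem:one-step-app} at each position. Where you go beyond the paper is in explicitly verifying that the conditional law of $\hat{y}_t$ given the realized prefix $\mathbf{x}_{1:t-1}$ is still $q_\theta(\cdot\mid\mathbf{x}_{1:j})$, via the per-position factorization of the one-step diffusion draft; the paper simply writes ``instantiated as above'' and takes this for granted, so your treatment is strictly more careful on the one point where something could actually go wrong.

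One small caveat on your closing remark: the failure of the analogous identity for AR drafters does not by itself block losslessness for classical speculative decoding, since there the acceptance ratio carries the \emph{conditional} $q^{\mathrm{AR}}(\cdot\mid\mathbf{x}_{1:t-1})$ in the denominator and the procedure halts at the first rejection. The factorization matters here specifically because Eq.~\eqref{eq:accept} pins the denominator to the prefix-only marginal $q_\theta(\cdot\mid\mathbf{x}_{1:j})$ and Algorithm~\ref{alg:framework} continues past rejections; independence is exactly what makes that fixed denominator coincide with the true proposal law at every $i$. So your diagnosis is right for DEER as written, just don't read it as an obstruction to losslessness for AR-based schemes in general.
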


\begin{proof}
We proceed by induction on $t$.

\textbf{Base case.}
For $t=1$, there is no history, and DEER samples from the target model by construction, so
\[
\mathbb{P}_{\mathrm{DEER}}(x_1) = p_{\mathrm{AR}}(x_1).
\]

\textbf{Inductive step.}
Assume that for some $t \ge 2$, the joint distribution over the first $t-1$ tokens generated by DEER matches that of $p_{\mathrm{AR}}$:
\[
\mathbb{P}_{\mathrm{DEER}}(\mathbf{x}_{1:t-1})
=
\prod_{s=1}^{t-1}
p_{\mathrm{AR}}(x_s \mid \mathbf{x}_{1:s-1}).
\]
Conditioned on any realized prefix $\mathbf{x}_{1:t-1}$, the one-step analysis in Lemma~\ref{lem:one-step-app} (instantiated as above) implies
\[
\mathbb{P}_{\mathrm{DEER}}(x_t \mid \mathbf{x}_{1:t-1})
=
p_{\mathrm{AR}}(x_t \mid \mathbf{x}_{1:t-1}).
\]
Therefore,
\begin{align*}
\mathbb{P}_{\mathrm{DEER}}(\mathbf{x}_{1:t})
&=
\mathbb{P}_{\mathrm{DEER}}(\mathbf{x}_{1:t-1})\,
\mathbb{P}_{\mathrm{DEER}}(x_t \mid \mathbf{x}_{1:t-1})
\\
&=
\Biggl(
\prod_{s=1}^{t-1}
p_{\mathrm{AR}}(x_s \mid \mathbf{x}_{1:s-1})
\Biggr)
p_{\mathrm{AR}}(x_t \mid \mathbf{x}_{1:t-1})
\\
&=
\prod_{s=1}^{t}
p_{\mathrm{AR}}(x_s \mid \mathbf{x}_{1:s-1}),
\end{align*}
which completes the induction.
\end{proof}

Theorem~\ref{thm:deer-lossless-app} formally establishes that DEER is a \emph{lossless} decoding scheme: it achieves acceleration by using the diffusion drafter $q_\theta$ to propose blocks of tokens, while provably preserving the exact output distribution of the target autoregressive model $p_{\mathrm{AR}}$.

\section{KV cache of DLLM}

At present, the community lacks mature support for diffusion language models (DLLMs) with KV caching in mainstream inference frameworks. Consequently, for batch size $B>1$ our implementation cannot yet be integrated with popular systems such as vLLM~\cite{vllm} and SGLang~\cite{SGLang}. Nevertheless, there has been rapid progress on enabling KV cache for DLLMs. Fast-dLLM~\cite{Fast-dLLM} is among the earliest attempts to explore KV caching for diffusion-based LMs, and its design is particularly well aligned with block diffusion architectures. More recently, dInfer~\cite{dInfer} proposes a complete and efficient KV cache mechanism for dLLMs together with a dedicated high-throughput inference engine. We expect these techniques to be gradually incorporated into mainstream inference frameworks such as vLLM and SGLang. Once such integration becomes available, DEER can naturally leverage these KV cache implementations and is expected to exhibit substantial advantages in batched inference scenarios.

\end{document}